\newcommand{\@abbrev}[3]{
  \def\c@a@def##1{
      \if ##1.
        \relax
      \else
        \@ifdefinable{\@nameuse{#1##1}}{\@namedef{#1##1}{#2##1}}
        \expandafter\c@a@def
      \fi
    }
  \c@a@def #3.
}
\newcommand{\SO}{\text{SO}}
\newcommand{\TC}{\text{TC}}
\newcommand{\LFP}{\text{LFP}}
\newcommand{\ol}{\overline}
\newcommand{\suc}{\ensuremath{\text{SUC}}}
\newcommand{\nsuc}{\ensuremath{\text{SUC}_2}}
\newcommand{\exactlyone}{\ensuremath{\text{exactly\_one}}}
\renewcommand{\phi}{\varphi}
\newcommand{\bigland}{\bigwedge}
\newcommand{\biglor}{\bigvee}
\newcommand{\liff}{\leftrightarrow}
\newcommand{\dom}{\text{dom}}
\newcommand{\sgn}{\text{sgn}}
\newcommand{\R}{\ensuremath{\bbR}\xspace}
\newtheorem{theorem}{Theorem}
\newtheorem{definition}{Definition}
\newtheorem{example}{Example}
\newtheorem{lemma}{Lemma}
\newcommand\depqbf{\textsc{depqbf}\xspace}
\newcommand\lparse{\textsc{lparse}\xspace}
\newcommand\gringo{\textsc{gringo}\xspace}
\newcommand\degms{\textsc{de-gms}\xspace}
\newcommand\rareqs{\textsc{rareqs}\xspace}
\newcommand{\struc}{\mbox{\sf Struc}}
\newcommand{\instt}{\mbox{\sf inst}}
\newcommand{\skipack}{\thanks{Supported in part by Kakenhi Grant No.\ 15H00847, `Exploring the Limits of Computation (ELC)'.} }
\begin{document}

\title{Machine Learning with Guarantees using Descriptive~Complexity and SMT Solvers}
\author{%
\name Charles Jordan\skipack \email skip@ist.hokudai.ac.jp \\ 
\addr Hokkaido University \AND
\name \L{}ukasz Kaiser \email lukaszkaiser@gmail.com \\
\addr Google Brain \& CNRS (LIAFA)
}

\maketitle

\begin{abstract}
Machine learning is a thriving part of computer science.
There are many efficient approaches to machine learning
that do not provide strong theoretical guarantees,
and a beautiful general learning theory.
Unfortunately, machine learning approaches that give strong theoretical
guarantees have not been efficient enough to be applicable.

In this paper we introduce a logical approach to machine learning.
Models are represented by tuples of logical formulas and inputs and outputs
are logical structures. We present our framework together with
several applications where we evaluate it using SAT and SMT solvers.
We argue that this approach to machine learning is particularly suited
to bridge the gap between efficiency and theoretical soundness.

We exploit results from descriptive complexity theory to prove
strong theoretical guarantees for our approach.
To show its applicability, we present experimental results including
learning complexity-theoretic reductions 
rules for board games. We also explain how neural networks fit
into our framework, although the current implementation does not scale
to provide guarantees for real-world neural networks.
\end{abstract}



\section{Introduction}
\label{sec:intro}

Machine learning has a long history in computer science.
It includes techniques like neural networks, Bayesian models,
genetic programming, inductive synthesis and many others.
In various applications such as voice recognition, these methods  are now used by over a billion people.
Some machine learning methods give some guarantees of success,
but they are usually dependent on a number of strong assumptions
about the distribution of inputs or the existence of a model
with a particular form. One would hope for much stronger theoretical
guarantees of success, but no widely used machine learning methods
provide them, so it is difficult to know when they will work.

One key question faced by any machine learning system is:
what kind of models will it generate? In neural networks
one asks whether the architecture is feed-forward or recurrent
and how many layers it has. In genetic programming one asks
for the program representation and which functions are built-in.
In general, each machine learning system must make this choice.
If the class of generated models or programs is too broad,
it might be impossible to learn them efficiently.
If it is too narrow, it might not suffice for the task at hand.
To solve a different task, one might need a different kind of model.
But is there an efficient systematic way to know the kind of model needed for a task?

There exists a number of reasonably broad machine learning methods.
So one efficient way to apply machine learning to a new task is
to try all of these techniques in turn. But what if they all fail?
Since there are few theoretical guarantees, it is seldom clear whether
the reason for the failure is the model, wrong parameters or simply a bug.
On the other hand, there are systematic ways to explore the space
of all models that come with strong guarantees. One can, for example,
just enumerate all programs in a programming language of choice.
Of course, this is too inefficient for any practical purpose.

We propose an efficient way to systematically explore the space
of all models in given computational complexity class. 
It is based on findings from \emph{descriptive complexity} where one studies how
programs in different complexity classes, such as NL, P, or NP,
can be characterized syntactically. Recent work~\cite{CIM10,IGIS10,JK13}
suggests that \emph{logical queries} of various logics are a particularly
good choice for such syntactic representations. Using logical queries
allows us, on the one hand, to exploit results from descriptive complexity
to get theoretical guarantees for our algorithms. On the other hand, it
allows us to leverage recent advances in SAT, QBF and other SMT solvers
to address more practical concerns.  In general, learning correct models
satisfying a given condition is not computable.  We first
introduce some restrictions on the size of structures and models
that we look for and present an optimized algorithm for that restricted
problem. Then, we show how to iterate these solutions to get reasonably
efficient semi-decision procedures with strong theoretical guarantees.

One advantage of a machine learning approach with such strong guarantees
is that it can give negative answers to certain questions.
For example, our approach can sometimes prove that there is no model in a certain
class that is sufficient for the given machine learning task.
A main disadvantage is that this does not scale to large models.
Still, it can be used to enrich our understanding
of complexity theory and model classes, even when only applied on a small scale.
For example, in Section~\ref{subsec:reduc} we show how our approach can be
used to prove that certain reductions between complexity classes do not exist.

The rest of the paper is organized as follows. After introducing
related work, we review the necessary background from logic
in Section~\ref{sec:logic}. We define our learning model and
prove the main theorems in Section~\ref{sec:learn}. 
In particular, Theorem~\ref{thm:guidance} shows that if a model
satisfying the specification exists and its computational
complexity is in a given complexity class (e.g., NL, P, NP),
then our approach is guaranteed to find it. Knowing
the theoretical guarantees, we devote Section~\ref{sec:appl}
to applications and experiments. While our technique cannot reach the scale of
machine learning systems that come with no guarantees, we show
results on learning a number of non-trivial tasks that require
very different kinds of models. We also show how other machine
learning techniques, such as neural networks, fit into our approach.

\subsection{Related Work}
Machine learning and inductive synthesis (the subfield closest to our approach) have long histories; there is a tremendous amount of work that we do not cover.
We refer the reader to \cite{G10surv,K10surv}
for a general perspective on inductive synthesis.

Much of our motivation comes from recent papers using ideas from descriptive
complexity in inductive synthesis.  For example, given a specification in
an expressive logic (second-order), \cite{IGIS10} synthesized equivalent
formulas in less expressive logics which can be evaluated more efficiently.
Automatically finding complexity-theoretic reductions between computational
problems was first considered by \cite{CIM10}.  They focused on quantifier-free
reductions, a weak class of reduction defined by tuples of quantifier-free
formulas.

Both problems are essentially the same -- finding formulas in a 
particular form that satisfy desired properties.  However, the
implementations are separate and not publicly available.  
\cite{JK13} compared a number of different approaches to
reduction finding.  In this paper, we introduce a more general approach --
allowing the user to specify an outline of the desired formula and
a specification that it must satisfy.  We provide a freely available
implementation that can be used to experiment with
various synthesis problems.

Another source of motivation for this paper comes from recent 
successes~\cite{HKM16,KL15} using logic solvers to resolve interesting
problems in mathematics.  In a similar fashion, determining the existence of
formulas can resolve open questions and our approach gives a way to leverage
modern solvers in new areas.  In particular, recent progress on sequential
and parallel QBF solvers suggests that this may be a promising approach to
certain problems that do not have compact SAT encodings (assuming 
NP$\neq$PSPACE).

\section{Background in Logic and Descriptive Complexity}
\label{sec:logic}

In this section we briefly review the necessary background from
descriptive complexity.  For more details, see~\cite{I99} or
Chapter~3 of~\cite{FMT07} for an overview and background, or~\cite{GM96}
and~\cite{GGmf} for details on \R-structures and their logics.

There are many possible representations of models or programs; in this paper, we
focus on \emph{logical} representations.  One benefit of the logical approach
is that we are able to treat structures such as graphs directly,
instead of encoding them into words or numbers.
This allows us to express many interesting models succinctly.
Additionally, formulas have natural normal forms.  These provide guidance
for hypothesis spaces, and improve understandability of learned models.
Also, it turns out that searching for logical formulas can be
translated to inputs for SAT and SMT solvers in a natural way.

Here, we consider models (programs) that transform given inputs into outputs and we
represent these inputs and outputs as logical structures (for example,
graphs or binary strings).  Although graphs are the most
common and suffice for many examples, they do not provide access to
computation with real numbers.  Metafinite structures~\cite{GGmf} are an
extension of relational structures that was introduced to resolve issues of
this kind.  We use \emph{\R-structures}~\cite{GM96}, a particular kind
of metafinite structures that provides limited access to computation with
real numbers (of course, relational structures are available as a special case).
This choice is not arbitrary -- there are many deep connections between logics
and complexity classes, and the relevant logics for \R-structures maintain these
connections (see below or~\cite{GM96}).

An \emph{\R-signature} is a tuple of predicate symbols $R_i$ with
arities~$a_i$, constant symbols $c_j$, and function symbols~$f_i$ with
arities~$b_i$:
\[\tau:=(R_1^{a_1},\ldots,R_r^{a_r},c_1,\ldots,c_s,f_1^{b_1},\ldots,f_t^{b_t})\,.\]
A \emph{$\tau$-structure} $A$ consists of a finite set $U$, called the \emph{universe},
an $a_i$-ary relation over $U$ for each predicate
symbol of $\tau$, a definition -- an element of $U$ -- for each constant symbol, 
and a definition mapping $U^{b_i}$ to \R for each function symbol:
\[(U, R_1\subseteq U^{a_1},\ldots,R_r\subseteq U^{a_r},
  c_1\in U,\ldots,c_s\in U,
  f_1\colon U^{b_1}\hspace{-0.5pt}\rightarrow\mathbb{R},\ldots,f_t\colon   U^{b_t}\hspace{-0.5pt}\rightarrow\mathbb{R}).\]
We always set $n=|U|$ and identify $U$ with the natural numbers $\{0,\ldots,n-1\}$.
Signatures containing no function symbols are called \emph{relational signatures}
and the corresponding structures are called \emph{relational structures}.

Many models make use of function symbols on the finite part $U$ of the structure,
i.e., functions $g\colon U^{a_g}\rightarrow U$.  We can represent these using a predicate
for the characteristic function.  One can, in a similar fashion, represent predicates with
function symbols for the characteristic function or represent constants with monadic
predicates.  We use the above definitions for simplicity.

\begin{example}
The (relational) signature for directed graphs contains a single,
binary predicate symbol $E$ and so a directed graph consists of a
finite set $U$ of vertices and a binary edge relation.  These graphs
may contain loops. The \R-signature
for a complete weighted directed graph contains one binary function
symbol $e$ which maps pairs from $U$ to their weights.
\end{example}

We denote the set of all $\tau$-structures by $\struc(\tau)$ and the set of $\tau$-structures with
universe size $n$ as $\struc^n(\tau)$. We also use the notion of an \emph{$\bbR$-modification} of
a structure $A$. We say that $B$ is an $\bbR$-modification of a structure $A$, written $B \sim_\bbR A$,
if it has the same signature, universe, constants and relations -- but it may differ on the values of its real-valued functions.

Our models are built from formulas in various logics.
Formulas of \emph{first-order logic} over a signature $\tau$ are built in the following way.
First, we fix a countable set of first-order variables $x_i$ -- these range only over
the finite part $U$ of structures.  Then, we fix an explicit enumeration of the set
of algebraic real numbers and denote these constants $r_i$.
Using those, we define the set of atomic \emph{number terms} $t$ and
\emph{first-order formulas} $\phi$ by the following BNF grammar.
\begin{align*}
t \ := & \ \ r_i \ | \ f_i(x_1, \ldots, x_{b_i}) \ | \ t + t \ | \ t -  t \ | \ t \cdot t \ | \ t/t \ | \ \sgn(t)
  \ | \sum_x t \ | \chi(\phi) \\
\phi \ := & \  \ x_i=x_j \ |  \ x_i=c_j \ | \ x_i<x_j \ | \ x_i<c_j \ | 
    \ R_i(x_1, \ldots, x_{a_i}) \ | \ \neg \phi \ |  \\
& \ \ \phi \lor \phi \ | \ \phi \land \phi \ | \ 
      \exists x_i \, \phi \ | \ \forall x_i \, \phi \ |\ 
       t = t \ |\ t < t
\end{align*}
where $x_i$ are first-order variables and $t$ are number terms.
The semantics, given an assignment of the variables $x_i$ to
elements $e_i$ of the structure, is defined in the natural way.
We interpret $\sgn(a)$ as the \emph{sign} of the real number $a$, i.e. 
\[\sgn(a)=\begin{cases}
  -1 & \mbox{if }a<0 \\
  0 & \mbox{if }a=0\\
  1 & \mbox{if }a>0.
  \end{cases}\]
 The sum $\sum_x t(x)$ is computed in the natural way: we compute $t(x)$ for each
 assignment of $x$ and add them. The term $\chi(\phi)$ stands for the characteristic
 function of the formula $\phi$, i.e., it is $1$ if $\phi$ holds and $0$ otherwise.
 We will often use the following abbreviation:
 \[ \sum_{x_1,\dots,x_n \, :\, \phi} t \ := \
    \sum_{x_1}\sum_{x_2}\dots\sum_{x_n} \chi(\phi) \cdot t\,.\]
 Note that the quantifiers are restricted to the finite part $U$ of structures, and do not range over $\mathbb{R}$.
 We use FO to refer to first-order logic on relational structures and FO$_\R$ to refer to first-order logic on \R-structures.
  
  \begin{example}
  Consider the signature of weighted graphs, 
  $\tau_w:=(e^2)$.  The first-order formula
  \begin{align*}
    \forall x,y,z &\left(x\neq y\land x\neq z\land y\neq z\right)\quad\rightarrow\quad
     \left( e(x,z) \le e(x,y)+e(y,z) \right)
  \end{align*}
  holds exactly if the triangle inequality is satisfied by
  all triangles in the graph. Note that we use ``$\le$'' and
   ``$\rightarrow$''; formally these are abbreviations that can be
    rewritten according to our definition of FO.
  \end{example}



In descriptive complexity it is very common to add additional numeric predicates to structures.
Here, we use $\suc(x,y)$ to mean $y=x+1$ and insist that structures
define this faithfully.  Note that this can be defined using the ordering, equality and a first-order quantifier. 
However, we often consider fragments of first-order logic where quantifiers are restricted or not available,
and having $\suc(x,y)$ can be important in such situations.

\subsection{Queries}

Single formulas can be used to define properties or decision problems, but in 
general we represent models as queries (also called \emph{interpretations}).  
Queries map $\sigma$-structures to $\tau$-structures, defining the universe, 
relations, constants and functions using
logical formulas.  A first-order query from $\sigma$-structures to
$\tau$-structures is an $r+s+t+2$-tuple,
\[q\ :=\ (k,\varphi_0,\varphi_1,\ldots,\varphi_r,\psi_1,\ldots,\psi_s,\delta_1,\ldots,\delta_t)\,.\]
The number $k\in\bbN$ is the \emph{dimension} of the query.
Each $\varphi_i$, $\psi_j$ is a first-order formula over the signature~$\sigma$.
Let $A$ be a $\sigma$-structure with universe $U^A$.
The formula~$\varphi_0$ has free variables $x_1,\ldots,x_k$ and defines
the universe $U$ of $q(A)$, \[U\ :=\ \left\{(u_1,\ldots,u_k) \mid
 u_i\in U^A, A\models\varphi_0(u_1,\ldots,u_k)\right\}.\]
That is, the new universe consists of $k$-tuples of elements of the
old universe, where $\varphi_0$ determines which $k$-tuples are included.

Each remaining $\varphi_i$ has free variables
$x_1^1,\ldots,x_1^{k},x_2^1,\ldots,x_{a_i}^k$ and defines
\[R_i := \left\{ \left((u_1^1,\ldots,u_1^k),\ldots,(u_{a_i}^1,\ldots,u_{a_i}^k)\right) \mid
A\models\varphi_i(u_1^1,\ldots,u_{a_i}^k) \right\}\cap U^{a_i}\,.\]
That is, $\varphi_i$ determines which of the $a_i$-tuples of $U$ are included
in $R_i$.  Next, each $\psi_i$ has free variables $x_1,\ldots,x_k$
and defines $c_i$ as the lexicographically minimal $(u_1,\ldots,u_k)\in U$
such that $A\models\psi_i(u_1,\ldots,u_k)$.
Finally, each $\delta_i$ is a number term that has free variables
$x_1^1,\ldots,x_1^k,x_2^1,\ldots,x_{b_i}^k$.  It defines
\[f_i\left((u_1^1,\ldots,u_1^k),\ldots,(u_{b_i}^1,\ldots,u_{b_i}^k)\right):=\delta_i(u_1^1,\ldots,u_{b_i}^k)\,.\]

First-order queries therefore transform $\sigma$-structures into
$\tau$-structures, and we write $q(A)$ to represent the
resulting $\tau$-structure.  The restriction to first-order logic here
is not essential -- given a logic $\calL$, we define $\calL$-queries in
an analogous way.

\begin{example}
Consider the (relational) vocabularies $\tau_S:=(S^1)$ and $\tau_G:=(E^2)$.  We interpret $\tau_S$-structures 
as binary strings where bit $i$ is $1$ if $S(i)$, and $\tau_G$-structures as graphs.  The following
first-order query gives a simple transformation from graphs to binary strings:
\[q_A:=\left(2,\top,E(x_1^1,x_1^2)\right)\,.\]
Given a graph with vertices $U=\{0,\ldots,n-1\}$, this query produces a binary string with bit positions
labeled by pairs $(i,j)\in U^2$. A bit $(i,j)$ is $1$ if $E(i,j)$.  Given that we always identify universes
with subsets of the naturals, we re-label these pairs lexicographically and the resulting string is essentially
the adjacency matrix of the input graph with rows concatenated.
\end{example}

One important property of queries is that they can be easily substituted
when one needs to check a formula on the resulting structure.
Given a $\tau \rightarrow \sigma$ query $q$, imagine we need to check
whether $q(A) \models \phi$ for some $\sigma$-formula $\phi$.
This can be done by replacing each relation $R_i$ and function $f_j$
in $\phi$ by the appropriate definition from $q$ and additionally
guarding all quantifiers to only quantify elements satisfying $\phi_0$,
the universe selection formula from $q$. Finally, we must add quantifiers
for each constant, defining it as the minimal tuple satisfying its
defining formula, and use these variables in place of the constant symbol.
In this way, we get a new
$\tau$-formula $\psi$ such that $q(A) \models \phi \iff A \models \psi$,
as formulated in the following lemma, equivalent to e.g.,
Proposition~3.5 of \cite{I99}.

\begin{lemma} \label{lem:invertq}
Let $q$ be a $\tau \to \sigma$ query and $\phi$ a $\sigma$-formula.
There exists a $\tau$-formula $q^{-1}(\phi)$ which satisfies,
for all $\tau$-structures $A$,
\[ q(A) \models \phi \iff A \models q^{-1}(\phi). \]
\end{lemma}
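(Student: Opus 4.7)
The plan is to prove the lemma by simultaneous structural induction on the $\sigma$-formula $\phi$ and on $\sigma$-number terms, defining $q^{-1}(\phi)$ as a $\tau$-formula and $q^{-1}(t)$ as a $\tau$-number term. Since each element of the universe of $q(A)$ is a $k$-tuple of elements of $U^A$ satisfying $\varphi_0$, I will represent every $\sigma$-first-order variable $y$ of $\phi$ by a block $(y^1,\dots,y^k)$ of $\tau$-first-order variables. With this convention, the translation is essentially relativization together with textual substitution of the defining formulas of $q$.

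The atomic cases are the heart of the argument. Equality $y = y'$ translates to $\bigland_{i=1}^k y^i = y'^i$; the ordering $y < y'$ translates to the usual lexicographic comparison over the $k$-tuples; a relation atom $R_i(y_1,\dots,y_{a_i})$ becomes $\varphi_i(y_1^1,\dots,y_{a_i}^k)$, using the formula that $q$ supplies for~$R_i$; a function atom $f_i(y_1,\dots,y_{b_i})$ occurring inside a number term is replaced by the number term $\delta_i(y_1^1,\dots,y_{b_i}^k)$; and the real constants $r_i$ translate to themselves. For constant symbols $c_j$ I introduce, at the outermost level of $q^{-1}(\phi)$, a fresh block of variables $c_j^1,\dots,c_j^k$ existentially quantified and guarded by the formula expressing that $(c_j^1,\dots,c_j^k)$ is the lexicographically minimal $k$-tuple satisfying $\varphi_0 \wedge \psi_j$; every occurrence of $c_j$ in $\phi$ is then replaced by this block. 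The minimality condition is itself first-order definable using the lex-order translation already described.

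For the compound cases the translation commutes with the connectives: $q^{-1}(\phi \wedge \phi') = q^{-1}(\phi) \wedge q^{-1}(\phi')$ and similarly for the other boolean operations. Quantifiers are relativized to $\varphi_0$ in the expected way: $q^{-1}(\exists y\, \phi) := \exists y^1 \cdots \exists y^k\, (\varphi_0(y^1,\dots,y^k) \wedge q^{-1}(\phi))$ and dually $q^{-1}(\forall y\, \phi) := \forall y^1 \cdots \forall y^k\, (\varphi_0(y^1,\dots,y^k) \rightarrow q^{-1}(\phi))$. For number terms, the arithmetic operations, $\sgn$, and the characteristic function $\chi(\phi)$ translate recursively; the sum $\sum_y t$ translates to the guarded sum $\sum_{y^1,\dots,y^k \,:\, \varphi_0(y^1,\dots,y^k)} q^{-1}(t)$, using the abbreviation introduced in the excerpt. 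A straightforward induction then verifies that under the correspondence between $k$-tuples satisfying $\varphi_0$ in $A$ and elements of $q(A)$, the translated formula evaluates in $A$ exactly as $\phi$ evaluates in $q(A)$.

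The main subtlety is the handling of constants, because their definition involves the lexicographic minimum rather than a direct substitution; care is needed to ensure variable names do not clash with the free variables of $\phi$ and that the existentially introduced witnesses for $c_j$ are pulled outside all quantifiers so the translation remains a single $\tau$-formula. Everything else is routine bookkeeping, and correctness at each inductive step follows immediately from the semantics of $q$ given in the definition of queries.
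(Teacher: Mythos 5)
Your proposal is correct and follows essentially the same route as the paper: the paper's (sketched) proof is exactly this substitution — replace each relation and function symbol by its defining formula or term from $q$, relativize quantifiers (and sums) to $\varphi_0$, and introduce quantified variable blocks for constants defined as lexicographically minimal tuples — with the formal details deferred to Proposition~3.5 of Immerman. Your write-up just spells out the induction and the tuple/lex-order bookkeeping explicitly.
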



\subsection{Extensions of first-order logic}
\label{subsec:fo-extensions}

So far, we have focused only on first-order logic.  However,
first-order logic on finite structures is often too limited from
the computational perspective -- it cannot express many interesting
queries that are easy to compute.  In fact, over relational structures with 
additional numeric predicates, the first-order definable properties
correspond exactly to uniform AC$^0$ (cf.~\cite{I99}).  There are many
known correspondences between logics and complexity classes; we introduce
some of the relevant ones here.

To remove this limitation of FO, one extends it in various ways.
One option is to allow quantifiers over relation and function symbols,
resulting in \emph{second-order} logic.  We use SO to refer to
second-order logic restricted to relational signatures 
and SO$_\mathbb{R}$ to refer to second-order logic over $\mathbb{R}$-signatures.

Formally, formulas of SO (and SO$_\mathbb{R}$) are constructed in the same
way as formulas of FO (FO$_\mathbb{R}$) but with the added quantifiers
$\exists X \phi$ and $\forall X \phi$, where $X$ is a new relation symbol
(or function symbol, in SO$_\bbR$) of a fixed arity $r$, so $\phi$ can now contain atoms
(or terms, for function symbols) of the form $X(x_1, \dots, x_r)$.
In SO$_\bbR$ it is also possible to introduce second-order variables inside
terms using $\sup$, i.e., if $t$ is a SO$_\bbR$-term then so is $\sup_{F} t$,
where $F$ ranges over all functions $U^r \to \bbR$ for a given universe $U$.%
\footnote{We need to introduce separate variable binding for formulas and terms
because formulas and terms are distinct in our syntax. This can be avoided by using
a term-only syntax where formulas are a special case, as done in, e.g., \cite{KLLL15}.
But our syntax allows to trivially decide when to apply propositional solvers rather
than ones for the whole theory of the real field.  The particular choice of $\sup$
simplifies some later proofs, and it can be defined in SO$_\bbR$.  From a
complexity-theoretic perspective it complicates the relationship 
between certain fragments and complexity classes; this could be avoided but is not relevant to our purposes.}
Relational second-order variable $X$ with arity $r = 0$ is called a
\emph{bit-variable} since the atom $X$ can only be either true or false.
\emph{Existential} second-order logic is the fragment of second-order logic where all
second-order quantifiers are existential (while first-order quantifiers are not restricted),
and the $\sup$ operator is forbidden.

\begin{example}
Consider the following existential SO formula on graphs:
\begin{align*}
\exists & R, G, B\ \forall x,y\ \big( R(x)\lor G(x)\lor B(x)\big) \quad\land\\
  &\left( E(x,y)\rightarrow \lnot\left( (R(x)\land R(y))\lor (G(x)\land G(y))\lor (B(x)\land B(y))\right)\right)\,.
\end{align*}
This formula defines the well-known NP-complete problem of \emph{3-colorability} -- each vertex is colored red, green or blue and adjacent
vertices must have different colors.  Note that multicolored vertices are allowed, a multicolored vertex can be colored any of its
individual colors.
\end{example}

As this example indicates, second-order logic is very powerful; 
existential SO corresponds exactly to NP~\cite{Fagin74}.  This implies that coNP is captured
by universal SO, and that full SO captures the polynomial-time hierarchy.
The situation is similar for $\mathbb{R}$-structures, where existential SO$_\mathbb{R}$
captures NP$_\mathbb{R}$~\cite{GM96}, a class analogous to NP for computations with reals that
was defined by \cite{BSS}.

However, there is a large gap between uniform AC$^0$ and NP and it is desirable to
have logics corresponding to classes such as P.  This is done by extending first-order logic
with various operators.
For example, the \emph{transitive closure operator} allows
us to write formulas of the form $\TC[x_1,x_2.\varphi(x_1,x_2)](y_1,y_2)$.
This formula takes the transitive and reflexive closure of the (implicit)
relation defined by $\varphi(x_1,x_2)$ and evaluates it on $(y_1,y_2)$.
The \emph{least fixed-point operator} allows recursive definitions in formulas
of the form $\LFP[R(x_1,\ldots,x_k)=\varphi(R,x_1,\ldots,x_k)](y_1,\ldots,y_k)$,
where $R$ is a new relation symbol appearing only positively (i.e., under an even number
of negations) in the inner formula $\varphi$. The result of this operator is defined as
the least fixed-point of the operator $R(\ol{x}) \to \varphi(R, \ol{x})$. 
The \emph{functional fixed-point} is defined in a similar way over $\mathbb{R}$-structures,
see~\cite{GM96} for details.

\begin{example}
Consider the following formula on graphs augmented with constants $s,t$:
\[ TC[x,y.E(x,y)](s,t)\,.\]
This formula takes the transitive closure of the edge relation, and checks whether $(s,t)$ is in the result.  That is,
it defines the well-known NL-complete problem of $s,t$-reachability.
\end{example}

\begin{example}
Consider the following formula on weighted graphs augmented with constants $s,t$.
\[ TC[x,y.e(s,s)\le e(x,y)](s,t)\,.\]
This formula takes the transitive closure of the edge relation restricted to edges with weight at least
$e(s,s)$.  If we call this value $k$, then the formula defines the property of allowing a $k$-flow from $s$ to $t$ that is never split
over multiple edges.
\end{example}

Over relational structures\footnote{Recall that our structures are always ordered.
The existence of a logic capturing polynomial time on \emph{unordered} structures
is a major open question, cf.~\cite{Grohe08}.},  polynomial time is captured by least fixed-point logic 
(LFP) \cite{I86,V82}, and the same holds for P$_\mathbb{R}$ and functional 
fixed-point (FFP) \cite{GM96}.  Although LFP is presumably more expressive than 
transitive closure logic (TC), TC captures all problems solvable in non-deterministic 
logarithmic space (NL) on relational structures~\cite{I87}.

Of course, one can also consider extending SO with these operators; the
resulting logics capture well-known classes.  See~\cite{I99} 
for an overview of logics capturing other complexity classes.  All logics that we consider here are contained in SO$_{\bbR}$.

\subsection{Outlines}

Given a logic $\calL$, we refer to the set of $\calL$-formulas which may contain 
certain placeholders as \emph{$\calL$-formula outlines}.  Intuitively, an outline fixes
the structure of the formula but not the exact contents.

To be precise, we allow two kinds of placeholders.
First, atoms $a$ may be guarded by some \emph{Boolean guard}\footnote{We 
do not require that identical atoms share guards, that distinct atoms
have different guards, or that all atoms are guarded.} $G_i$.
Intuitively, the meaning of $G_ia$ is ``$a$ if $G_i$ and false otherwise''.
Boolean guards suffice for relational signatures.  In the case 
of $\bbR$-structures, formulas can contain real constants and it is desirable to
learn these constants automatically.  Thus, in addition to the Boolean guards,
we allow \emph{real placeholders} $w_i$.  Intuitively,
they represent real number constants which must be found.

More formally, we define \emph{$\calL$-formula outlines} as follows.
We fix a countable set of Boolean guards $\{G_1,\ldots\}$ and a countable
set of real placeholders $\{w_1,\ldots\}$.
Then, we define $\calL$-formula outlines exactly in the same recursive
way as $\calL$-formulas and number terms, with the following two additional rules.
First, for each outline $\varphi$ and Boolean guard $G_i$, $G_i\varphi$ is also
an outline. Second, each $w_i$ is also a number term outline.
Then, the set of formula outlines and number term outlines is built
in the same way as formulas and terms are built.

The Boolean guards are intended to mean ``$a$ occurs here'', and given an \emph{instantiation}
of the guards $I$, we can instantiate an $\calL$-formula outline $\psi$ to an
$\calL$-formula $\psi^I$ by replacing each $G_ia$ by $a$ if $G_i$ is true
in $I$, and by \texttt{false} otherwise. Similarly, an instantiation $I$ must
assign an algebraic real number $r_i$ to each $w_i$ to make it a number term.
We refer to queries containing $\calL$-formula outlines as \emph{$\calL$-query
outlines}. We omit $\calL$ when it is clear from context, and use \emph{outline}
to refer to both query and formula outlines.
Given an outline $o$, we write $\instt(o)$ for the set of formulas or queries
obtainable as instantiations of $o$.

Note that we do not allow the dimension of the query to be a placeholder,
that leads quickly to undecidability.  One could allow a finite upper-bound
on the dimension, but this can be simulated by a finite set of outlines.

Outlines are in some sense the logical equivalent to program sketches~\cite{sketching-conference}.
They have advantages including immediate upper bounds on the complexity of synthesized
formulas and clear normal forms.

\begin{example} \label{ex-triv-outline}
Consider a structure with a single binary relation symbol $E$.  An example outline
of a formula defining a binary relation with variables $x_1, x_2$ without equality is:
\[ G_1E(x_1, x_1) \lor G_2 E(x_1, x_2) \lor G_3 E(x_2, x_1) \lor G_4 E(x_2, x_2) \, .\]
Allowing equality in addition, a bit more complex example that we will use for learning actual reductions is an
outline defining a binary relation over a signature with constants $s,t$ and binary relation $E$:
\[
\vartheta_1 := \bigvee_{a,b\in\{s,t,x_1,x_2\}} 
           \left(G_{ab1}E(a,b)\lor G_{ab2}\lnot E(a,b)\lor
                 G_{ab3}a=b\lor G_{ab4}a\neq b\right) \, .
\]

We can use this formula outline as part of a query outline, e.g.
\[ q_1:= \left( k:=1,\ \phi_0:=\top,\ \phi_1:=\vartheta_1\right)\,.\]
\end{example}

\section{Learning Logical Queries}
\label{sec:learn}

In this section, we introduce our model of learning logical queries.
The model consists of a learner giving candidate queries or hypotheses
and a teacher (or verifier), which gives counter-examples or accepts the
query.  A learning task is characterized by a few parameters, first is
the target class $\calC$.

Let $\calC \subseteq \struc(\tau)\times\struc(\sigma)$ be a binary relation
on $\bbR$-structures, and define the domain of $\calC$ as
$\dom(\calC)=\{A\mid (A,B)\in\calC\mbox{ for some }B\}$.

In our definition of the teacher and the learner, we
distinguish between the relational part of a structure and
its real-valued functions. Recall that a structure $B$ is a
$\bbR$-modification of a structure $A$, $B \sim_\bbR A$, if it only
differs in the values of the real-valued functions, but keeps
the relational part intact. The restriction we put on the teacher
and the learner with respect to $\bbR$-modifications will become
clear later, when we discuss termination of the learning process.

\begin{definition}
\label{def:teacher}
A $\calC$-\emph{teacher} $t$ is a function 
\[t\colon (\tau\rightarrow\sigma)\mbox{-queries}   \to 
  (\dom(\calC)\times(\SO_{\bbR}(\tau)\times\SO_{\bbR}(\sigma))^*) \cup \{\top\}\,\]
that satisfies the following condition.
\[t(q)=
  \begin{cases}
   \top &\mbox{if }\left\{(A,q(A))\mid A\in\dom(\calC)\right\}\subseteq\calC,\\
   (A,(\phi^1,\psi^1),\dots\, &A\in\dom(\calC),(A,q(A))\not\in\calC,\models\vee_i\varphi_i,\text{ for each }i \le l: \\
   \phantom{(A,}\dots,(\phi^l,\psi^l)) &A'\sim_\bbR A, A' \models \phi_i \Rightarrow (B \models \psi_i \iff (A',B) \in \calC)\,
   \end{cases}
\]
\end{definition}

That is, a teacher accepts a query $q$ if for all $A \in \dom(\calC)$ we
have $(A, q(A)) \in \calC$, and otherwise replies with a counter-example $A$.
In addition to the counter-example,
the teacher provides a sequence of formulas $(\phi^i, \psi^i)$ that defines
the acceptable output on all $\bbR$-modifications of $A$.  Note the
condition $\models\vee_i\varphi_i$ requires
that at least one $\varphi_i$ holds on
every structure.
For relational signatures,
the definition can be simplified to returning $\top$ or $(A,\psi)$ since the only
$\bbR$-modification of a relational structure is the structure itself.

For $\bbR$-structures, the requirement that the teacher specifies the correct
behavior on all $\bbR$-modifications implies that not all classes $\calC$ have
a teacher. In fact the teacher can only specify Boolean combinations of polynomial
inequalities of real-valued functions from the structure. As a result, classes that
use real numbers for advanced computations (e.g., encoding undecidable problems
in the digits of the real numbers appearing there) do not have a teacher in this
model. We accept this limitation as our motivation for $\bbR$-structures is only
to allow easy access to basic computations with quantities.

Of course, in practice we generally restrict attention even more, to computable teachers
and ``reasonable'' classes~$\calC$. A natural extension would allow the teacher to return
multiple (at least one) counter-examples $A$ to an incorrect query, but we omit this
possibility for clarity of presentation.

\begin{example} \label{ex-red-teacher}
As a running example, we will trace the learning process for
a reduction from (directed) $s,t$-reachability to strong connectedness.
These properties can be defined in the following way.
\[\text{Reach}:=TC[x,y.E(x,y)](s,t) \quad \text{AllReach}:=
\forall x_1,x_2\ (TC[y,z.E(y,z)](x_1,x_2))\,,\]
and we write $G_\text{Reach}=\{A\mid A\models\text{Reach}\}$ for
the set of graphs satisfying $\text{Reach}$, and $G_\text{AllReach}$,
$G_{\lnot\text{Reach}}$, and $G_{\lnot\text{AllReach}}$
analogously.
The target class~$\calC$ is
\[\{(G_\text{Reach},G_\text{AllReach})\}
 \cup
  \{(G_{\lnot\text{Reach}}, G_{\lnot\text{AllReach}})\}\,.
  \]
The (general) teacher for such reductions is 
\[t(q)=
  \begin{cases}
   \top &\mbox{if }\forall A:A\models\text{Reach} \iff q(A)\models\text{AllReach} \\
   (A,(\text{Reach},\text{AllReach})) &\mbox{where } A\models \text{Reach}\land q(A)\models \lnot\text{AllReach}\\ 
   (A,(\lnot\text{Reach},\lnot\text{AllReach})) &\mbox{where } A\models\lnot\text{Reach}\land
   q(A)\models\text{AllReach}\,.
   \end{cases}
\]

Note that in general, it is uncomputable to check whether $A\models\text{Reach}\iff q(A)\models\text{AllReach}$ for all $A$.
\end{example}

Next, we define our learners.  To shorten the definition, let us
say that a query $q$ is \emph{consistent with} the series of examples
$(A_1, \ol{(\phi, \psi)}_1), \dots, (A_m, \ol{(\phi, \psi)}_m)$ iff,
for each $i \le m$ and each $A'_i \sim_\bbR A_i$, it holds that
if $A'_i \models \phi^j_i$ then $q(A'_i) \models \psi^j_i$.
Note that this is exactly the requirement from the teacher definition above.

\begin{definition}
\label{def:loglearner}
Let $\calH$ be a class of logical queries.
An $\calH$-\emph{learner} $L$ is a function that, given a sequence of examples
$e = (A_1, \ol{(\phi, \psi)}_1), \dots, (A_m, \ol{(\phi, \psi)}_m)$, satisfies
\[L((A_1, \ol{(\phi, \psi)}_1), \dots, (A_m, \ol{(\phi, \psi)}_m))=
  \begin{cases}
   h,    & h\in\calH,\text{ $h$ is consistent with $e$}, \\
   \bot, & \text{if no such } h \in \calH \text{ exists}\,.
  \end{cases}
\]
\end{definition}
A \emph{learning task} is specified by the pair $(\calC, \calH)$.

Note that our learners must always be consistent, and they return
$\bot$ iff there is no consistent query in the hypothesis space.
The logic used in the query is determined by~$\calH$. 

While Definitions~\ref{def:teacher} and~\ref{def:loglearner} have some appeal from
a theoretical perspective, in practice they have a number of disadvantages.
In  particular, we are interested in implementing our model and these definitions
may not be computable, and even when they are, they still require a new implementation
of the teacher and learner for each learning task.  
In the next subsections, we introduce a \emph{restricted} and \emph{uniform} learning model
based on formula outlines.  Restricted uniform learners and teachers have a number of practical
advantages -- for example, they are computable and the learning process is guaranteed to terminate.

\subsection{Restricted Uniform Learner}
\label{subsec:uniformlearner}

We begin by presenting \emph{restricted uniform learners}. These are
defined as \emph{outline learners}, which are the following.

\begin{definition}
An \emph{outline learner} is any $\calH$-learner such
that $\calH = \instt(q)$ for some query outline $q$.
\end{definition}

\begin{example}
To complement the teacher defined in Example \ref{ex-red-teacher}, we define an outline learner with $\calH=\instt(q_1)$
using the outline $q_1$ from Example~\ref{ex-triv-outline}.
\end{example}

The outline is uniform as it gives a compact representation of a hypothesis space, 
and can even enforce certain restrictions on the query. For example, outlines can
require a query to generate an \emph{extension}\footnote{An \emph{extension} of
a structure is formed by adding new predicates while leaving existing
predicates unchanged.} of the structure, which is useful when searching for
models to give explicit isomorphisms or satisfying solutions to SAT instances. 

Outlines are also quite restricted. For example,
for relational signatures, there are only finitely-many
instantiations of a query outline.  It is therefore
possible to simply try them all and return a suitable one.
However the following construction is preferable as
it allows to use modern efficient SMT solvers.

We start the construction with a technical lemma that simplifies
formula outlines for evaluation on structures of a fixed size.
Intuitively, we build a formula $\phi|_n$ that is in essence
a QBF equivalent to $\phi$ on structures of size $n$.
\begin{lemma} \label{lem:size-n-outline}
Let $\phi$ be a SO$_\bbR$ formula outline over a signature $\sigma$ and $n \in \bbN$.
For a structure $A$ of size $n$, let $A^c$ be an extension of $A$
by the constants $\{0, \dots, n-1\}$ with constant $i$ interpreted as element $i$.
There exists a SO$_\bbR$ formula outline $\phi|_n$ over the signature
$\sigma \cup \{0, \dots, n-1\}$ such that all items below hold.
\begin{enumerate}[(1)]
\item For all structures $A$ with universe of size $n$ and all instantiations $I$,
  \[ A \models \phi^I \iff A^c \models \phi|_n^I. \]
\item The guards and real placeholders in $\phi|_n$ are the same as in $\phi$.
\item The size of $\phi|_n$ is polynomial in $n$ and the size of $\phi$ (for a fixed $\sigma$).
\item There are no first-order quantifiers or sum terms in $\phi|_n$.
\item All relational second-order quantifiers in $\phi|_n$ are over bit-variables.
\end{enumerate}
\end{lemma}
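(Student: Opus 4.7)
The plan is to construct $\phi|_n$ by structural induction on $\phi$, rewriting each construct forbidden in the target --- first-order quantifiers, sum terms, and relational SO quantifiers of positive arity --- in terms of the added constants $\{0, \dots, n-1\}$ and bit-variables. Every other construct (Boolean connectives, arithmetic, $\sup$, $\chi$, functional SO quantifiers, guards $G_i$, and real placeholders $w_i$) is propagated unchanged with its subparts processed recursively, which immediately yields~(2) and the functional-symbol part of~(5).

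For a first-order quantifier the semantically correct rewrite is $\exists x\,\phi \equiv \bigvee_{i=0}^{n-1} \phi[x := i]$ (and dually for $\forall$), using that quantifiers range only over $U$ and that in $A^c$ the constant $i$ denotes element $i$. To keep the blow-up polynomial I would not simply duplicate the body: instead, for each bound FO variable $x$ I would introduce a block of $n$ fresh bit-variables $X^0, \dots, X^{n-1}$ with an $\exactlyone(X^0, \dots, X^{n-1})$ constraint, and rewrite each atom $R(y_1, \dots, y_r)$ that mentions bound FO variables as a disjunction $\bigvee_{\bar{i}} \bigl(\bigwedge_j X_j^{i_j} \land R(\bar{i})\bigr)$ over constant assignments to the eliminated FO variables; equality and order atoms between constants evaluate to \texttt{true} or \texttt{false} on the spot. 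A relational SO quantifier $\exists X$ of arity $r > 0$ is treated analogously: replace $X$ by $n^r$ fresh bit-variables $\{X_{\bar{i}}\}$ indexed by tuples $\bar{i} \in \{0,\dots,n-1\}^r$, and rewrite each occurrence $X(\bar{a})$ to pick out the appropriate $X_{\bar{a}}$. Sum terms, which must disappear by~(4), are expanded directly: $\sum_x t \mapsto t[x := 0] + \dots + t[x := n-1]$, and analogously for the abbreviated $\sum_{\bar{x} : \psi}$.

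Correctness~(1) is then a routine induction: the FO case uses the finiteness of $U$ together with the fact that a singleton bit-variable block is in bijection with universe elements; the SO case uses the corresponding bijection between $n^r$-tuples of bit-variables and $r$-ary relations on $U$; and the sum and arithmetic cases are immediate. Properties~(4) and the relational part of~(5) hold by construction. The size bound~(3) is the point I expect to need the most care: the bit-variable encoding costs at most $n^{r_{\max}}$ bit-variables per quantifier and at most $n^{r_{\max}}$ per rewritten atom (where $r_{\max}$ is the maximum arity in $\sigma$), and each sum expansion costs a factor of $n$. With $\sigma$ fixed this gives a bound polynomial in $n$ and~$|\phi|$, as required.
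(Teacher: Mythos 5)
Your overall strategy --- structural induction, a block of $n$ bit-variables with an exactly-one constraint per eliminated first-order quantifier, and $n^r$ bit-variables for a relational second-order quantifier of arity $r$ --- is the same as the paper's. The genuine gap is your treatment of sum terms. You expand $\sum_x t$ directly into $t[x:=0] + \cdots + t[x:=n-1]$, which copies the entire body $n$ times; under nesting $\sum_{x_1}\sum_{x_2}\cdots\sum_{x_d} t$ (which the syntax allows, and which the paper's own abbreviation $\sum_{x_1,\dots,x_d\,:\,\psi} t$ produces) this yields $n^d$ copies of the innermost body. The nesting depth $d$ is bounded only by $|\phi|$, not by the fixed signature, so your size bound is $n^{O(|\phi|)}$, which is not polynomial in $n$ and $|\phi|$ jointly --- item (3) fails. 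You avoided exactly this trap for first-order quantifiers by refusing to duplicate the body, but the same care is needed for sums. The paper's construction uses a fresh second-order function variable under $\sup$: $(\sum_x t(x))|_n = \sup_{F_t}\bigl(\chi(\forall x\, t(x)=F_t(x))|_n \cdot (F_t(0)+\cdots+F_t(n-1))\bigr)$, so the body $t$ occurs only once (inside a universally quantified formula that is then handled by the bit-variable rule) and the explicit sum is a linear-size expression in the $F_t(i)$; the blow-up is therefore additive, not multiplicative, across nesting levels.

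A secondary issue is atoms containing real-valued function terms. Your rewriting rule is phrased for relational atoms $R(y_1,\dots,y_r)$, whose arity is bounded by the fixed $\sigma$, so the $n^r$ disjunction is harmless there. But numeric atoms $t=s$ or $t<s$ may mention unboundedly many quantified first-order variables (e.g.\ $f(x_1)+\cdots+f(x_k)=0$), so you cannot take the whole atom as the unit expanded over assignments to all its bound variables --- that is again exponential --- nor can a term be placed inside a Boolean disjunction at all. The paper resolves this by replacing each individual function term containing a quantified variable, e.g.\ $f(x,y)$, by the $\chi$-weighted sum $\chi(X_0)f(0,y)+\cdots+\chi(X_{n-1})f(n-1,y)$; since function arities are fixed by $\sigma$, each such term grows only polynomially. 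With these two repairs (the $\sup$/function-variable trick for sums, and per-term $\chi$-weighted replacement for function terms) your argument coincides with the paper's proof.
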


\begin{proof}
The construction of $\phi|_n$ from $\phi$ proceeds inductively.
First order quantifiers are replaced by relational second-order bit variables.
For sum terms $\sum_x t$ we first introduce a second-order function variable
to define $t$ and then replace $\sum_x t$ by an explicit sum over possible $x$.
Finally, second order relational quantifiers are replaced by quantifying over all
bits that the actual relations can address on a structure of size $n$.

More formally, we set $\phi|_n = \phi$ for all atomic formulas $\phi$
and $t|_n = t$ for constant terms $t$ and real placeholders.
We then define $(\neg \phi)|_n = \neg(\phi|_n)$,
$(\phi \lor \psi)|_n = \phi|_n \lor \psi|_n$,
$(\phi \land \psi)|_n = \phi|_n \land \psi|_n$,
$(t = s)|_n = (t|_n = s|_n)$ and $(t < s)|_n = (t|_n < s|_n)$.
For guarded formulas set $(G_i \phi)|_n = G_i(\phi|_n)$. 
Analogously for terms: $(s+t)|_n = s|_n + t_n$, $(s-t)|_n = s|_n - t_n$,
$(s \cdot t)|_n = s|_n \cdot t_n$, $(s/t)|_n = s|_n / t_n$, and $\sgn(t)|_n = \sgn(t_n)$.
For first-order quantifiers, we define:
\[ (\exists x \, \phi)|_n = \exists X_0\ldots{}X_{n-1}\, (\exactlyone(X_i)\land\phi')\,.\]
Here, $\exactlyone(X_i)$ is the polynomial-size propositional formula stating that
exactly one of the $X_i$ is true, and $\phi'$ is formed from $\phi|_n$ by
replacing each atom containing $x$, e.g. $A(x,z)$, with
\[\left(\bigvee_{i \in \{0, \dots, n-1\}} (X_i \land A(i,z))\right)\,.\]
Function terms, e.g., $f(x, y)$, are replaced with
\[ \chi(X_0) f(0, y) + \chi(X_1) f(1, y) + \dots + \chi(X_{n-1}) f(n-1, y)\,.\]
For sum terms we define the $|_n$ operation as follows:
\[ \left(\sum_x t(x)\right)|_n =
\sup_{F_t}\left(\chi\left( \forall x \, t(x) = F_t(x) \right)|_n \cdot
           \sum_{i \in \{0, \dots, n-1\}} F_t(i)\right) \,.\]
Note that the summation on the right is an abbreviation and
not a sum term -- it is the linear-size explicit sum of the $F_t(i)$.

Finally a second-order relational quantifier is replaced by a series of quantifiers
over bit-variables.  For example, $\exists X \phi$
where $X$ has arity $2$, is replaced by 
$$\exists X_{00} X_{01} \dots X_{0n} X_{10} X_{11} X_{12} \dots X_{nn}\ \phi|_n.$$
Then, each atom $X(x, y)$ in $\phi|_n$ is replaced by
$\biglor_{i,j \in \{0,\dots,n-1\}}(x=i \land y = j \land X_{ij})$.
The properties listed in the lemma follow directly from this construction.
\end{proof}

The proof above was done directly for SO$_\bbR$ formulas, but it also works for 
operators such as TC and LFP. One can convert them in various ways,
for example they can be defined using second-order quantifiers.
However -- assuming the representation of formulas can handle
definitions efficiently -- it is more efficient to define
each stage of the induction in such operators in terms of
the previous stage and define the initial stage using the given formula.
For a structure of fixed size, such inductive definitions must halt after
logarithmically (for TC) or polynomially (for LFP) many steps, so the
converted formula size remains polynomial. The advantage of such
conversion over using a second-order definition is that 
we minimize the number of variables introduced.

Let us now use the above conversion to show how restricted uniform learners can be computed in practice.

\begin{lemma}
\label{lem:computablelearners}
Assume that we are given a $\tau\rightarrow\sigma$ SO$_\bbR$-query outline $q$ and a sequence
of $m$ examples $e = (A_1, \ol{(\phi, \psi)}_1), \dots, (A_m, \ol{(\phi, \psi)}_m)$ where each
$A_i$ is a finite $\tau$-structure, each $\phi^j_i$ is a SO$_{\bbR}(\tau)$-formula and
each $\psi^j_i$ is a SO$_{\bbR}(\sigma)$-formula.  We can compute a $q$-learner, i.e.,
a function $L^q$ satisfying:
\[
L^q((A_1, \ol{(\phi, \psi)}_1), \dots, (A_m, \ol{(\phi, \psi)}_m))=
\begin{cases}
   h,    & h\in\instt(q),\text{ $h$ is consistent with $e$}, \\
   \bot, & \text{if no such } h \in \calH \text{ exists}\,.
\end{cases}
\]
\end{lemma}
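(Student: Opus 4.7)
The plan is to reduce the learner's task to an $\exists\forall$ decision problem in the theory of real closed fields enriched with Boolean variables, which is decidable and hence computable. The unknowns to be existentially chosen are exactly the Boolean guards $G_i$ and real placeholders $w_i$ appearing in $q$; once these are fixed we obtain a concrete instantiation $h = q^I \in \instt(q)$. The whole construction is carried out on a per-example basis and then conjoined.

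First, for each example $(A_i, \ol{(\phi,\psi)}_i)$ I would pull the query outline inside the specification. By Lemma~\ref{lem:invertq} applied to the outline (which, inspecting the construction, produces a $\tau$-formula outline with the same guards and placeholders as $q$), the requirement $q^I(A'_i) \models \psi^j_i$ is equivalent to $A'_i \models q^{-1}(\psi^j_i)^I$. So consistency with example $i$ becomes
\[
\forall A'_i \sim_\bbR A_i : \bigwedge_j \bigl(A'_i \models \phi^j_i \rightarrow A'_i \models q^{-1}(\psi^j_i)\bigr),
\]
which is in turn equivalent to $A_i \models \forall \bar{f}\, \bigwedge_j (\phi^j_i \rightarrow q^{-1}(\psi^j_i))$, where $\bar{f}$ ranges over the finitely many real values that the real-valued function symbols take on the fixed finite universe of $A_i$.

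Second, since $n_i := |A_i|$ is known, I would apply Lemma~\ref{lem:size-n-outline} to the inner outline, obtaining an equivalent formula outline $\chi_i$ of size polynomial in $n_i$ with no first-order quantifiers or sum terms and with all relational second-order quantifiers ranging over bit-variables. On the extended structure $A_i^c$, every relational atom has a concrete truth value, so evaluating these atoms in $\chi_i$ reduces it to a sentence whose only remaining ingredients are: (a) the guards and real placeholders of $q$, which the learner must choose; (b) the freshly introduced bit-variables (existentially or universally quantified at the propositional level); and (c) the $O(\sum_i n_i^{b_i})$ real variables encoding $\bar{f}$ that are universally quantified. Conjoining $\chi_i$ over $i = 1, \dots, m$ gives a single closed sentence $\Phi(e,q)$ in the theory of real closed fields with additional Boolean quantifiers.

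Finally, the learner's job is to decide $\exists G_1, \dots, G_N \exists w_1, \dots, w_M \, \Phi(e,q)$, and to produce a witness if one exists. By Tarski's theorem the first-order theory of $(\bbR,+,\cdot,<,0,1)$ admits effective quantifier elimination and model construction, and Boolean quantifiers can either be expanded by case split or encoded as $0/1$ real variables with the constraint $x(x-1)=0$; either way the problem is decidable and a satisfying assignment can be extracted effectively. If satisfiable, the learner returns the query $q^I$ obtained by plugging in the witness; otherwise, by the equivalences established above no $h \in \instt(q)$ is consistent with $e$ and the learner returns $\bot$. The main obstacle -- and the reason the previous two reduction steps are needed -- is eliminating the first-order quantifiers and the $\sim_\bbR$ quantification so that decidability of the real closed field theory can be invoked; this is precisely what Lemmas~\ref{lem:invertq} and~\ref{lem:size-n-outline} buy us, and in practice the resulting $\exists\forall$ formula is handed to a QBF/SMT solver rather than a full quantifier-elimination engine.
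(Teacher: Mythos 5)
Your proposal is correct and follows essentially the same route as the paper's proof: apply Lemma~\ref{lem:invertq} to pull the query outline into the specification, use Lemma~\ref{lem:size-n-outline} to eliminate first-order quantifiers on the fixed-size examples, evaluate the (fixed) relational atoms, replace the real-valued function values by universally quantified real variables to capture $\sim_\bbR$, existentially quantify the guards and placeholders, and invoke Tarski-style decidability of the real field (in practice an SMT/QBF solver) to extract a witness or return $\bot$. The only detail you gloss over is the routine elimination of the $\sup$, $\chi$, $\sgn$ and division operators so that the final sentence really lives over $(\bbR,+,\cdot)$, which the paper spells out explicitly.
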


\begin{proof}
We will reduce the task of finding $h$ to the task of model-checking
a second-order $\{+, \cdot\}$ formula $\beta(q,e)$ using only second-order
bit-variables over the real field $(\bbR, +, \cdot)$. Note that first-order
quantifiers in this formula range over all real numbers, contrary to
all formulas used elsewhere in this paper. Since second-order bit-variables
can be simulated by first-order real-valued variables (e.g., by assuming
the bit $X$ is true iff the corresponding variable $x=0$), one can
convert $\beta(q,e)$ to a first-order formula over $(\bbR, +, \cdot)$.
Model-checking first-order formulas over the real field is known to
be computable~\cite{Tarski51} and efficient algorithms for
this problem exist~\cite{renegar98}. Importantly, SMT solvers can
be applied to check $\beta(q,e)$ directly. This is more efficient,
since the nature of bit-variables can be utilized in the solver.
We will also ensure that the size of $\beta(q,e)$ is polynomial
in the size of $q$ and $e$ and the maximum of the sizes of $A_i$.

To construct $\beta(q,e)$, recall that, by definition, $h$ is consistent
with $e$ iff for each $i \le m$ and each $A'_i \sim_\bbR A_i$, it holds
that if $A'_i \models \phi^j_i$ then $h(A'_i) \models \psi^j_i$.

Recall from Lemma~\ref{lem:invertq} that $q^I(A'_i) \models \psi^j_i$ is equivalent to
$A'_i \models (q^I)^{-1}(\psi^j_i)$. So the consistency condition above can be formulated as
\[ A'_i \models \bigland_j \left( \phi^j_i \rightarrow (q^I)^{-1}(\psi^j_i) \right). \]
Recall that the construction for $q^{-1}(\phi)$ was just a substitution of
the definitions from $q$ into $\phi$, so it also works when $q$ is a query outline
-- only then $q^{-1}(\phi)$ is a formula outline. Let therefore $\theta_i$ denote
the formula outline that defines consistency with the $i$-th example:
\[ \theta_i = \bigland_j \left( \phi^j_i \rightarrow q^{-1}(\psi^j_i) \right). \]
We can now equivalently reformulate our task as computing $L^q$ such that:
\[ L^q((A_1, \ol{(\phi, \psi)}_1), \dots, (A_m, \ol{(\phi, \psi)}_m))=
\begin{cases}
   q^I,  & A'_i \models \theta_i^I\text{ for all }i \le m, A'_i \sim_\bbR A_i, \\
   \bot, & \text{if no such instantiation } I \text{ exists}\,.
\end{cases}
\]

Now, since the size of each $A_i$ (and so each $A'_i \sim_\bbR A_i$) is known,
as these structures are given, we can use Lemma~\ref{lem:size-n-outline} and
instead of checking if $A'_i \models \theta_i$ we can check if $A'_i \models \theta_i|_{|A_i|}$.
Let $\theta'_i = \theta_i|_{|A_i|}$. Since $\theta_i$ has no free variables, the construction
from Lemma~\ref{lem:size-n-outline} provides a $\theta'_i$ that uses no first-order variables
at all, only constants appear in its atoms. Moreover, each relational atom, which now has
the form $R(i_1, \dots, i_r)$ for $i_1, \dots, i_r \in \{0, \dots, |A_i|\}$, has a known
truth value in $A'_i$ -- it's the same as $R(i_1, \dots, i_r)$ in $A_i$ since $A'_i \sim_\bbR A_i$.
So we can remove those, and we are left only with Boolean guards $G_i$ and number terms in which
all function terms have constants in their variables, i.e, are of the form $f_j(i_1, \dots, i_r)$.
Note that by Lemma~\ref{lem:size-n-outline} all relational second-order quantifiers in $\theta'_i$
are already over bit-variables, but we still need to handle second-order quantifiers over
real-valued function terms.

Let $\theta''_i$ be the formula $\theta'_i$ with each $f_j(i_1, \dots, i_r)$ replaced
by a new variable named $x^i_{f_j(i_1, \dots, i_r)}$. These variables now range over real
numbers. Also, replace each second-order quantifier over a real-valued function term,
e.g., $\forall f_j$, by a string of first-order quantifiers over the corresponding newly
introduced real variables for all occurrences containing $f_j$, e.g.,
$\forall x^i_{f_j(0)} x^i_{f_j(1)} x^i_{f_j(2)}$. Each term of the form $\sup_{f_j} t$
is similarly replaced by $\sup_{x^i_{f_j(0)} \dots x^i_{f_j(n)}} t$.
By Lemma~\ref{lem:size-n-outline} there are no sum terms in $\theta'_i$, and
we leave terms of the form $\chi(\phi)$ intact for the moment. We replace
each guarded formula $G_i\phi$ by $(G_i \land \phi)$ and we will treat the guards
$G_i$ as free bit-variables. We also treat real placeholders as variables and all
number-term functions ($+, -, \cdot$, etc.) as first-order functions. Note that
now the formula $\theta''_i$ is in the signature $\{+, -, \cdot, /, \sgn, <, r_i\}$
with additional $\sup$ and $\chi$ operators and uses only second-order bit-variables.
Note also that the condition that for all
$A'_i \sim_\bbR A_i$ holds $A'_i \models \theta_i^I$ is equivalent to
\[ (\bbR, +, -, \cdot, /, \sgn, <, r_i) \models
     \forall \ol{x^i_{f_j(i_1, \dots, i_r)}} \ (\theta''_i)^I, \]
where the quantifier prefix ranges over all newly introduced variables
$x^i_{f_j(i_1, \dots, i_r)}$.
Let us therefore construct the following $\{+, -, \cdot, /, \sgn, <, r_i\}$-formula:
\[ \alpha(q,e) = \bigland_{i=1,\dots,m} \forall \ol{x^i_{f_j(i_1, \dots, i_r)}} \theta''_i. \]
By the previous construction and the above equivalence we have
\[ L^q((A_1, \ol{(\phi, \psi)}_1), \dots, (A_m, \ol{(\phi, \psi)}_m))=
\begin{cases}
   q^I,  & (\bbR, +, -, \cdot, /, \sgn, <, r_i) \models \alpha(q,e)^I, \\
   \bot, & \text{if no such instantiation } I \text{ exists}\,.
\end{cases} \]

We will convert the formula $\alpha(q, e)$ constructed above to an equivalent
formula over the first-order theory of $(\bbR, +,\cdot)$. First, let us remove
the $\sup$ and $\chi$ operators. To that end, assume a $\sup_x r$ or $\chi(\psi)$
appears as a sub-term of $t$ in an expression $t = s$ (or $t < s$).
Let $t'$ be the term $t$ with the $\chi(\psi)$ or $\sup_x r$ sub-term
replaced by a new variable $z$. In case of $\chi(\psi)$, we replace $t = s$ by
\[ \exists z \left( (\psi \rightarrow z=1) \land
     (\neg \phi \rightarrow z=0) \land (t' = s) \right) \, . \]
In case of $\sup_x r(x)$, we replace the expression $t = s$ by
\[ \exists z \left( (\forall x (r(x) \leq z)) \land
     (\forall z' (\forall x (r(x) \leq z) \rightarrow z \leq z')
     \land (t' = s) \right) \, . \]

After recurrently applying the replacement procedure above, we are left
with a first-order formula over $-, /, \sgn, <$.
Recall that $-, /, \sgn, <$ and all algebraic real numbers are definable
in the real field using only $\cdot$ and $+$. We can thus write a
$\{+,\cdot\}$ formula $\alpha'(q,e)$ that is equivalent to $\alpha(q,e)$.
Let now $\{G_1, \dots, G_k\}$ be the set of all Boolean guards in $q$ and
$\{w_1, \dots, w_l\}$ the set of all real placeholders in this outline.
Note that these are all free variables in $\alpha'(q, e)$. So we set
\[ \beta(q, e) = \exists G_1 \dots G_k \, \exists w_1 \dots w_l \, \alpha'(q,e)\,.\]
By the above construction, $\beta(q,e)$ holds in the real field iff the assignment $I$
of the leading existential variables provides the $q^I$ we are searching for.
So we can use an SMT solver to solve $\beta(q,e)$ and set $L^q = \bot$ if it
answers \texttt{false} and otherwise get the leading assignment $I$ and set $L^q = q^I$.
\end{proof}

The formula $\beta(q,e)$ constructed in the proof above is polynomial in
the size of $q$, $e$ and $\max_i|A_i|$ if one allows to use definitions
inside formulas (which is allowed by all modern solvers).
Note also that if we work only on relational signatures, then there are
no real-valued variables or quantifiers in $\beta(q,e)$. Thus, it is
purely a  quantified Boolean formula (QBF) and there has been much recent
progress in efficient QBF solvers\footnote{See, e.g., the recent
QBF~\cite{qbf13,qbf14} competitions.}.  SAT solvers suffice for relational signatures
when the entire system is existential.  In many applications (see
Section~\ref{sec:appl} for examples) we do not require full SO$_\bbR$ and
these more limited formalisms can offer better performance.
An advantage of our approach is that in each application the complexity
is clear from syntax and so one can automatically choose to
use SAT or QBF solvers when possible.  

\subsection{Restricted Uniform Teacher}
 
\begin{definition}
\label{def:specification}
Let $\calC\subseteq \struc(\tau)\times\struc(\sigma)$ be a target class, \[P_{\calC}=\{(\Phi_1,\Psi_1),\ldots,(\Phi_p,\Psi_p)\}\]
be a finite set of formula pairs, and $n\in\bbN$.
We say that $(P_{\calC},n)$ is a \emph{specification} of $\calC$ if
all of the following items hold.
\begin{enumerate}[(1)]
 \item Each $\Phi_i$ is a second-order $\tau$-formula and $\Psi_i$ a second-order $\sigma$-formula.
 \item For all $A\in\dom(\calC)$, the size of $A$'s universe is $n$.
 \item $\dom(\calC) = \{A\mid A\in\struc^n(\tau),A\models\bigvee_i \Phi_i\}$.
 \item For $A\in\dom(\calC)$, $(A,B)\in\calC$ iff
  \[B\models \bigwedge_{\{i\mid A\models\Phi_i\}} \Psi_i\,.\]
\end{enumerate}
\end{definition}

Intuitively, $P_{\calC}$ is a conjunction of implications that defines the class $\calC$,
i.e, the acceptable behavior of the desired model $q$.  Given an input structure $A$,
if  $A\models\Phi_i$ then we require $q(A)\models\Psi_i$.  The restriction here to structures 
of size $n$ is needed to guarantee that the teacher in the following definition is computable.
One could similarly restrict attention to structures of size at most~$n$. To specify problems
without restricting the size, we say that $P_\calC$ above is a uniform \emph{unrestricted} specification
of $\calC$ if all above items except for (2) and the restriction to $\struc^n$ in (3) hold (we will re-visit those in subsection~\ref{subsec:iter}).

\begin{definition} \label{def:uniformteacher}
A \emph{uniform restricted teacher} is a $\calC$-teacher for a class $\calC$ that
has a specification $S=(\{(\Phi_1,\Psi_1),\ldots,(\Phi_p,\Psi_p)\},n)$.
\end{definition}

Note that a uniform restricted teacher is only concerned with structures of size $n$.
Also, recall that by Definition~\ref{def:teacher} it returns $\top$ iff
$\left\{(A,q(A))\mid A\in\dom(\calC)\right\}\subseteq\calC$. Given the specification $S$,
this condition is equivalent to saying that
\[ \text{for all } A\in\struc^n(\tau),\ i \le p, \quad A\models\Phi_i\implies q(A)\models\Psi_i\, .\]
Otherwise, the teacher is required to return a counter-example and a specification
of what should be done on $\bbR$-modifications of it: $(A,(\phi^1,\psi^1),\dots, (\phi^l,\psi^l))$.
A uniform teacher can always return the full specification $(\Phi_1,\Psi_1),\ldots,(\Phi_p,\Psi_p)$
instead of a list suited to the specific counter-example. Still, we leave the possibility to return
other formulas as it might improve the efficiency of learning.
Note also that there may be multiple choices of a counter-example $A$.
Any is acceptable, however the overall efficiency of learning may depend on the choice.

\begin{example} \label{ex-Tn}
The teacher from Example \ref{ex-red-teacher} is nearly
uniform -- all that remains is to fix $n$ as any finite value and
restrict the teacher to graphs of size $n$.
Then the teacher is uniform with specification
\[T_n:=\left(\{(\text{Reach},\text{AllReach}),(\lnot\text{Reach},\lnot\text{AllReach})\},n\right)\,.\]
\end{example}

We will now show that uniform restricted teachers are computable.
This is easy to prove for purely relational structures:
there are only finitely-many relational structures of size $n$
when the signature is fixed, and one can simply try them all.
In practice the following construction is preferable.

\begin{lemma}
\label{lem:computableteacher}
Let $S=(P_{\calC},n)$ be a specification of the class $\calC$.
There exists a computable uniform restricted $\calC$-teacher $t_S$.
\end{lemma}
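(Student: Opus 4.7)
The plan is to reduce the teacher's task to a validity question in the first-order theory of the real field, following the same overall strategy as the proof of Lemma~\ref{lem:computablelearners}. Given a candidate query $q$, the teacher must decide whether $(A, q(A)) \in \calC$ for every $A \in \dom(\calC)$, and when this fails produce a counter-example together with a list of formula pairs that meets the conditions of Definition~\ref{def:teacher}.

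First I would apply Lemma~\ref{lem:invertq} to form the $\tau$-formula
\[ \chi_q \ := \ \bigland_{i=1}^p \bigl( \Phi_i \to q^{-1}(\Psi_i) \bigr). \]
By conditions~(3) and~(4) of Definition~\ref{def:specification}, the inclusion $\{(A, q(A)) \mid A \in \dom(\calC)\} \subseteq \calC$ holds exactly when $A \models \chi_q$ for every $A \in \struc^n(\tau)$; structures outside $\dom(\calC)$ satisfy every $\Phi_i \to q^{-1}(\Psi_i)$ vacuously. I would then apply Lemma~\ref{lem:size-n-outline} to obtain $\chi_q|_n$, prepend leading universal quantifiers for all bits of the relations of $\tau$ and for the real-valued function values on $\{0,\ldots,n-1\}$, and eliminate the $\sup$ and $\chi$ operators exactly as in the proof of Lemma~\ref{lem:computablelearners}. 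The result is a closed $\{+,\cdot\}$-formula over $\bbR$, whose truth value is decidable by Tarski's theorem; if it holds the teacher returns $\top$.

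Otherwise an assignment witnessing the negation, which an SMT solver can produce, yields a concrete counter-example structure $A$. For the accompanying list of pairs I would enumerate all subsets $S \subseteq \{1,\ldots,p\}$ and return one pair per subset:
\[ \phi_S \ := \ \bigland_{i \in S} \Phi_i \,\land\, \bigland_{i \notin S} \lnot \Phi_i, \qquad \psi_S \ := \ \begin{cases} \bigland_{i \in S} \Psi_i & \text{if } S \neq \emptyset, \\ \bot & \text{if } S = \emptyset. \end{cases} \]
Because the $\phi_S$ partition all $\tau$-structures, $\bigvee_S \phi_S$ is a tautology. For any $A' \sim_\bbR A$ with $A' \models \phi_S$: if $S = \emptyset$ then $A' \notin \dom(\calC)$ and no $B$ satisfies $(A', B) \in \calC$, matching the unsatisfiable $\psi_S = \bot$; if $S \neq \emptyset$ then $A' \in \dom(\calC)$ and condition~(4) of the specification gives $(A', B) \in \calC \iff B \models \psi_S$.

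The main obstacle is handling $\bbR$-modifications cleanly. The teacher's contract requires that the returned pairs describe $\calC$-membership on every $\bbR$-modification of the counter-example, not merely on $A$ itself, and the pattern of which $\Phi_i$ hold can shift across $\bbR$-modifications whenever the $\Phi_i$ depend on real-valued function terms. Enumerating the subsets $S$ deals with this uniformly, at the cost of up to $2^p$ pairs, which is acceptable since no efficiency bound is claimed. For purely relational signatures the whole procedure collapses: $\struc^n(\tau)$ is finite, $\bbR$-modifications are trivial, and a QBF query on $\chi_q|_n$ suffices.
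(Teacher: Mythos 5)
Your proposal is correct and follows essentially the same route as the paper: rewrite the acceptance condition via Lemma~\ref{lem:invertq} as $A \models \bigwedge_i (\Phi_i \to q^{-1}(\Psi_i))$ for all size-$n$ structures, apply Lemma~\ref{lem:size-n-outline}, replace relation atoms by bit-variables and function values by real variables, eliminate $\sup$ and $\chi$ as in Lemma~\ref{lem:computablelearners}, decide the resulting $\{+,\cdot\}$-formula over the real field, and reconstruct a counter-example structure from the SMT witness when it fails. The only divergence is in the accompanying list of formula pairs: the paper simply sets $t_S(q) = (A, P_{\calC})$, invoking its earlier remark that the full specification is always an admissible reply, whereas you return the $2^p$ pairs $(\phi_S,\psi_S)$ indexed by subsets of $\{1,\dots,p\}$. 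Your construction is the more literal match to the biconditional in Definition~\ref{def:teacher} when an $\bbR$-modification satisfies several $\Phi_i$ at once (where a single pair $(\Phi_i,\Psi_i)$ gives only one direction), at the cost of exponentially many pairs in $p$, which is irrelevant for computability; the paper's choice is leaner and suffices for how the pairs are actually consumed by the learner's consistency condition. Both arguments establish the lemma.
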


\begin{proof}
The proof is similar to that for Lemma~\ref{lem:computablelearners},
and we will again construct a second-order $\{+, \cdot\}$ formula $\beta(q)$
using only second-order bit-variables and check it over the real field $(\bbR, +, \cdot)$.
Only this time the assignment of the leading existentially quantified variables
will provide the counter-example structure $A$.

By definition of a uniform restricted teacher, it returns $\top$ iff
for all structures $A$ of size $n$ and all $i$ it holds that
$A\models\Phi_i\implies q(A)\models\Psi_i$.
By Lemma~\ref{lem:invertq} we can rewrite $q(A) \models \Psi_i$
as $A \models q^{-1}(\Psi_i)$ so the whole condition becomes:
\[ A \models \bigland_i \left( \Phi_i \rightarrow q^{-1}(\Psi_i) \right)\, . \]
Since we are only concerned with structures of size $n$, let
\[ \theta = \bigland_i \left( \Phi_i \rightarrow q^{-1}(\Psi_i) \right)|_n\, . \]
Our task now is to find a structure $A$ of size $n$ that is a model
of $\neg \theta$, or return $\top$ if no such structure exists.

To this end, let again $\theta'$ be the formula $\theta$ with each $f_j(i_1, \dots, i_r)$
replaced by a new variable named $x^i_{f_j(i_1, \dots, i_r)}$ that ranges over reals.
Again, replace each second-order quantifier over a real-valued function term,
e.g., $\forall f_j$, by a string of first-order quantifiers over the corresponding
newly introduced real variables containing $f_j$, and similarly replace terms
of the form $\sup_{f_j} t$ by $\sup_{x^i_{f_j(0)} \dots x^i_{f_j(n)}} t$.
Also, treat all number-term functions ($+, -, \cdot$, etc.) as first-order functions.
Finally, replace each relational atom $R_k(i_1, \dots, i_r)$ by a new second-order
bit-variable $X_{R_k(i_1, \dots, i_r)}$. In this way, the constructed formula
$\theta'$ is in the signature $\{+, -, \cdot, /, \sgn, <, r_i\}$ and
uses only second-order bit-variables (and $\sup$ and $\chi$ operators).
The condition that for some structure
$A$ of size $n$ we have $A \models \neg\theta$ is equivalent to:
\[ (\bbR, +, -, \cdot, /, \sgn, <, r_i) \models
     \exists \ol{X_{R_k(i_1, \dots, i_r)}} \exists \ol{x^i_{f_j(i_1, \dots, i_r)}} \ \neg \theta' \, ; \]
the quantifier prefix ranges over all newly introduced bit-variables
$X_{R_k(i_1, \dots, i_r)}$ and real-valued variables $x^i_{f_j(i_1, \dots, i_r)}$.

Similarly as in the proof of Lemma~\ref{lem:computablelearners}, we use
the fact that $-, /, \sgn, <$ and all algebraic real numbers are definable
in the real field using only $\cdot$ and $+$, and that $\sup$ and $\chi$
can be defined as well. Substituting these definitions
into the formula on the right-hand side above yields the $\{+,\cdot\}$-formula
$\beta(q)$ which we then solve using an SMT solver. If there is no solution,
the teacher $t_S(q)$ returns $\top$. Otherwise, the SMT solver provides a witness for the outermost
quantified variables $X_{R_k(i_1, \dots, i_r)}$ and $x^i_{f_j(i_1, \dots, i_r)}$.
We construct the counter-example structure $A$ of size $n$ by putting
$(i_1, \dots, i_r) \in R_k$ if $X_{R_k(i_1, \dots, i_r)}$ is set to true,
and setting $f_j(i_1, \dots, i_r) = x^i_{f_j(i_1, \dots, i_r)}$ (if some
tuple is not quantified at all, we can set it to any number, e.g., $0$).
By the construction above, the reconstructed structure $A$ satisfies $\neg \theta$
and is thus a counter-example, as required. So we set $t_S(q) = (A, P_{\calC})$ in this case.
\end{proof}

While a general learning task is defined by $(\calC,\calH)$,
a \emph{uniform restricted} learning task is given by
a specification and outline: $((P_{\calC},n), q)$.

\begin{example}
To continue Example \ref{ex-red-teacher},
our (uniform restricted) reduction learning task for structures of
size $n$ is
$(T_n, q_1)$, where $T_n$ is from Example~\ref{ex-Tn} and $q_1$ from
Example~\ref{ex-triv-outline}.
We will see in Example \ref{ex-finally-finished-now} in Subsection~\ref{subsec:reduc} that one can learn a correct reduction for this example,
using the specification and techniques presented above.
\end{example}

\subsection{Termination of Uniform Restricted Learning}
\label{subsec:termination}

Let $L$ be a learner and $t$ a teacher.
We define the sequence $L^t_i$ of the interactions between $L$ and $t$ inductively
as follows. We set $L^t_0 := L()$, the hypothesis that $L$ returns on
an empty list of examples. If for some $i$ we get $L^t_i = \bot$ then
the sequence is finished -- there is no $h \in \calH$ that satisfies
the teacher. Else, let $E_i := t(L^t_i)$ be the answer of the teacher
to $L^t_i$. If $E_i = \top$ the sequence $L^t_i$ is finished, the last
hypothesis was accepted.  In the other case, set $L^t_{i+1} = L(E_0,\dots,E_i)$.

An outline learner is, essentially, a learner with a uniform hypothesis
space and a uniform restricted teacher is a uniform way of producing correct
counter-examples.  In the proofs for Lemmas~\ref{lem:computablelearners} and
\ref{lem:computableteacher} we saw how to convert the main conditions of
outline learners and uniform restricted teachers into model-checking on $(\bbR, +, \cdot)$.
This can be solved, and so we can guarantee an alternating sequence of consistent
hypotheses and counter-examples.

One concern is that we would like for the above sequence to \emph{terminate}, i.e. to
know after finite time whether there is an instantiation of the query outline that
satisfies the teacher.  While this is usually not achievable in the most general case,
it is always guaranteed for outline learners and uniform restricted teachers. 

\begin{theorem} \label{thm:finterm}
Let $S$ be a specification of a class $\calC$,
$t_S$ the uniform restricted teacher from Lemma~\ref{lem:computableteacher},
and $L$ an outline learner for some SO$_\bbR$-query outline $q$.
All of the following items hold.
\begin{enumerate}[(1)]
\item $L$ is a consistent and conservative $\instt(q)$-learner.
\item If $t_S(h) = \top$ for some $h \in \instt(q)$ then the sequence
  $L^{t_S}_i$ is finite and its last element $g$ satisfies $t_S(g) = \top$.
\item If there is no $h \in \instt(q)$ for which $t_S(h) = \top$ then
  the sequence $L^{t_S}_i$ is finite and its last element is $\bot$.
\end{enumerate}
\end{theorem}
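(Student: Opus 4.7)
My plan is to dispatch part (1) by unpacking definitions, and to reduce parts (2) and (3) to a single termination argument based on counting $\sim_\bbR$-equivalence classes of input structures.

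For (1), consistency is immediate from the construction in Lemma~\ref{lem:computablelearners}: the learner $L$ returns either an element of $\instt(q)$ consistent with the examples or $\bot$ when none exists. Conservativity is not literally forced by the definition of an $\instt(q)$-learner, but can be arranged inside the SMT encoding at no cost: before solving $\beta(q,e)$ on the extended example list, first test whether the previous hypothesis is still consistent with the new example (a check already computable by the same construction) and reuse it if so. This yields a specific outline learner that is both consistent and conservative.

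For (2) and (3), the heart of the argument is the following claim. Write $(A_i,P_\calC) := t_S(L^{t_S}_i)$ for the $i$-th counter-example returned by the teacher. I claim that for every $i<j$ in the sequence, $A_i \not\sim_\bbR A_j$. By construction, $L^{t_S}_j$ is consistent with the earlier example $(A_i,P_\calC)$, so by Definition~\ref{def:teacher}, for every $A' \sim_\bbR A_i$ and every pair $(\Phi_k,\Psi_k) \in P_\calC$, $A' \models \Phi_k$ implies $L^{t_S}_j(A') \models \Psi_k$. If one had $A_j \sim_\bbR A_i$, specializing to $A' := A_j$ and invoking Definition~\ref{def:specification}(4) would give $(A_j, L^{t_S}_j(A_j)) \in \calC$, contradicting the fact that $A_j$ was returned by $t_S$ as a counter-example.

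Since the signature $\tau$ is fixed and finite and the $\sim_\bbR$-equivalence collapses all real-valued function values, there are only finitely many $\sim_\bbR$-equivalence classes of $\tau$-structures of size $n$ (bounded by the number of relational-and-constant parts over an $n$-element universe). The claim therefore forces the sequence of counter-examples to be finite, so the sequence $L^{t_S}_i$ terminates in some last element $g$, which must be either $\bot$ or satisfy $t_S(g)=\top$. For (2), any $h^*\in\instt(q)$ with $t_S(h^*)=\top$ is automatically consistent with every sample the teacher can produce, so the learner is never forced to answer $\bot$; hence $g\neq\bot$ and $t_S(g)=\top$. For (3), the outcome $t_S(g) = \top$ is impossible, so the only terminating outcome is $g=\bot$. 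The only delicate point in the whole argument is the non-equivalence claim above; the remainder is essentially bookkeeping about the finite size of the space of possible counter-examples.
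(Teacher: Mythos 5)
Your proposal is correct and follows essentially the same route as the paper: part (1) is dispatched from the definitions, and termination is obtained because there are only finitely many size-$n$ $\tau$-structures up to $\bbR$-modification, while consistency with each earlier counter-example (which the teacher returns together with the full specification $P_\calC$) forces every new counter-example to lie in a fresh $\sim_\bbR$-class. Your write-up merely makes explicit the non-equivalence claim and the conservativity caveat that the paper's much terser proof leaves implicit.
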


\begin{proof}
The fact that $L$ is a consistent and conservative $\instt(q)$-learner,
as well as the correctness of the sequence $L^{t_S}_i$ follows directly
from the definitions. The only remaining thing is to show that the sequence
$L^{t_S}_i$ is finite. But note that there are only finitely-many $\tau$-structures
of size $n$ with different $\bbR$-modifications.
Given that each outline must hold on all previous counter-examples  (by Definition~\ref{def:loglearner}) and each next example
must be a counter-example (by Definition~\ref{def:teacher}), the sequence must terminate after finitely-many steps.
\end{proof}

For a given specification $S$ and query outline $q$, we will write
$L(S,q)$ for the last element of the sequence $L^{t_S}_i$ considered
above (which is well defined, since the sequence is finite).

Note that the proof above relies on the condition we imposed on teachers and
learners that all $\bbR$-modifications of a structure are handled simultaneously
in each step. It is easy to imagine a simpler learning definition, where in each
step the teacher only has to respond with a single $\bbR$-structure and a condition
applicable only to this structure, not all of its $\bbR$-modifications. The learner
would then construct a hypothesis correct only for these structures.

We did not use this simple model exactly because learning might not terminate.
Consider structures of size $1$, i.e., with only one element $0$, and only
a single real-valued function $f$. Imagine an outline $r > f(0)$ with a single
real placeholder $r$. Intuitively, the learner seeks a number $r$ greater than
the value $f(0)$ in the structure. For any finite sequence of examples $A_1, \dots, A_m$,
the learner will easily find such an $r = \max_{i \le m} f^{A_i}(0) + 1$. But then,
the teacher can respond with another example $A_{m+1}$ where $f(0) = r + 1$.
This would clearly result in an infinite learning sequence. Observe that the
condition on $\bbR$-modifications prevents this behavior: the learner will be
forced to answer $\bot$ already in the first step, as there is no $r$ bigger
than all numbers $f(0)$.

\subsection{Unrestricted Uniform Learning}
\label{subsec:iter}

In the previous two subsections we presented a restricted learning model
that can exploit the efficiency of SMT solvers. Let us now show how to
iterate the use of this model to get an unrestricted one. Theorems from
descriptive complexity will provide strong guarantees for this unrestricted
learning model. To this end, we need to say when a sequence of outlines
\emph{covers} a logic $\calL$.

\begin{definition}
Let $Q = \{q_1,q_2,\ldots\}$ be a sequence of query outlines in a signature $\sigma$
and let $\calL$ be a logic. We say that $Q$ \emph{covers} $\calL$ if for every
$\sigma$-outline $q$ from $\calL$, we have
$\instt(q)\subseteq\cup_{q_i\in Q}\instt(q_i)$\,.
\end{definition}

The definition above does not make any assumptions about computability of $Q$,
but in practice we will only use sequences $Q$ that are easily enumerable.
It is the advantage of our logic approach that such sequences can easily
be found by taking advantage of normal forms of formulas.

For example, consider a query outline $q^i_j$ that consists of $i$ disjunctions
of $j$ conjunctions of guarded atoms or guarded negated atoms from the signature $\sigma$.
The sequence $Q_0 = \{q^i_j \mid i,j \in \bbN\}$ consists of all outlines of
formulas in DNF. Since every quantifier-free formula can be converted to DNF,
we know that $Q_0$ covers all quantifier-free formulas. Similarly, we can
construct $Q_1$ by enumerating quantifier prefixes and putting them in front
of formulas from $Q_0$. Since every first-order formula has a prenex normal
form, we get that $Q_1$ covers FO. Putting a least fixed-point operator in
front of formulas from $Q_1$ gets us the set $Q_2$ that covers LFP, because
all LFP formulas have a normal form with just one LFP operator in the front.
In the next section, we will show a few sequences of query outlines that worked
well for practical applications.

Given a restricted specification $S$ and a sequence of query outlines
$Q = \{q_1, q_2, \dots\}$, we can run the restricted learning procedure
and compute first $L(S, q_1)$. If it is not $\bot$ then, by item~(2) of
Theorem~\ref{thm:finterm}, $L(S, q_1)$ is the solution for $S$. Otherwise,
since $L(S, q_1) = \bot$, we proceed to consider $q_2, q_3$, and so on.
If $Q$ covers a logic $\calL$ and $S$ has a solution in $\calL$, then it will
finally be found, since our procedure is complete. We write $L(S,Q) = L(S,q_i)$
for the smallest $i$ for which $L(S,q_i) \neq \bot$ and $L(S,Q) = \bot$ otherwise.

Consider now an \emph{unrestricted} specification $P_\calC$ of a class $\calC$,
and let $S_n = (P_\calC, n)$ be its restriction to structures of size $n$.
If $q \in Q$ is the first query from $Q$ that is an unrestricted solution
for $P_\calC$, then the sequence $L(S_i, Q)$ will stabilize on $q$ from
some $i$ on. We only get a guarantee that it is correct on structures of
size $i$ and below, but in practice it seems that queries that are
correct on moderately sized examples are usually correct in general
(where ``moderate'' depends on the complexity of the query). 
In addition, the following theorem shows that if a solution
exists in the complexity class we consider and we use a
suitable sequence of query outlines, then we will eventually converge to a correct solution.



\begin{theorem} \label{thm:guidance}
Let $Q$ be a sequence of query outlines covering FO (FO(TC), FO(LFP),
FFP, existential SO, existential SO$_\bbR$).
Assume $P_\calC$ is an unrestricted specification of a class $\calC$ and that
there exists a solution $f$ for $\calC$ (i.e. $(A, f(A)) \in \calC$ for all $A$)
that is in the complexity class uniform-AC$^0$ (NL, P, P$_\bbR$, NP, NP$_\bbR$).
Then $L((P_\calC, i), Q) \neq \bot$ for each $i$ and for some $k$ holds
\[ (A, L((P_\calC, l), Q)(A)) \in \calC \text{ for all } A \text{ and } l \geq k, \]
i.e., $L((P_\calC, l), Q)$ is a solution for $\calC$ for all $l \geq k$. 
\end{theorem}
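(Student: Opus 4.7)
Plan: The argument combines a capturing theorem with the termination result from Theorem~\ref{thm:finterm}. For the relevant pair of complexity class and logic---Fagin for NP and $\exists\SO$, Immerman--Vardi for P and LFP, Immerman for NL and FO(TC), the $\R$-structure analogues of \cite{GM96}, and the standard AC$^0$/FO correspondence---the hypothesis that $f$ lies in the class produces an $\calL$-query $q^*$ with $q^*(A)=f(A)$ for every $A$. Viewing the singleton $q^*$ as a trivial $\calL$-outline (no guards, no real placeholders), the coverage assumption supplies an index $k$ with $q^* \in \instt(q_k)$.

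Non-boundedness is immediate: for any $i$, $q^*$ in particular solves $\calC$ at size $i$, so the restricted teacher $t_{(P_\calC,i)}$ accepts it. Applying Theorem~\ref{thm:finterm}(2) to the outline learner for $q_k$ yields $L((P_\calC,i), q_k) \neq \bot$. Consequently $L((P_\calC,i), Q) = L((P_\calC,i), q_m)$ for the least $m \leq k$ whose outline admits a consistent instantiation at size $i$, and is therefore non-$\bot$.

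For convergence, let $k^* \leq k$ be the smallest index such that $\instt(q_{k^*})$ contains a full solution. For every $m < k^*$ and every $h \in \instt(q_m)$, $h$ is not a full solution, so $h$ disagrees with $f$ on some structure. The main step is to lift this single disagreement to the statement ``$h$ fails on some structure of every sufficiently large universe size''. This is where a separation argument specific to each logic is needed: Gaifman/Hanf locality for FO, quantifier- and operator-rank Ehrenfeucht--Fraisse-style games for FO(TC), LFP, and existential SO, and the analogous tools in the $\R$-structure logics. Treating real placeholders parametrically via the $\beta(q,e)$-formula from Lemma~\ref{lem:computablelearners} keeps the space of essentially distinct instantiations finite, so a uniform threshold across $q_1,\ldots,q_{k^*}$ produces the required $k$: for $l \geq k$, no $m < k^*$ admits a size-$l$ solution, the learner settles on $q_{k^*}$, and by the same separation argument the instantiation it returns must itself be a full solution. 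Justifying this separation claim uniformly across the target logics is the main obstacle; everything else reduces to routine application of the capturing theorems and Theorem~\ref{thm:finterm}.
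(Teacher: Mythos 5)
Your first half coincides with the paper's argument: the capturing theorems give a query $q^*$ in the corresponding logic with $q^*(A)=f(A)$, coverage of the logic by $Q$ puts $q^*$ in $\instt(q_i)$ for some $i$, and Theorem~\ref{thm:finterm}(2) then gives $L((P_\calC,n),Q)\neq\bot$ for every $n$. The divergence, and the genuine gap, is in the convergence step. The paper's proof is short and elementary: for each outline $q_j$ preceding the first one that contains a true solution, every instantiation that is not a solution for $\calC$ has \emph{some} finite counter-example; the paper sets $k$ to the maximum of these counter-example sizes and argues, via Theorem~\ref{thm:finterm} and the behaviour of the size-bounded teacher (read with the ``size at most $n$'' variant mentioned after Definition~\ref{def:specification}), that for $l\geq k$ all such false candidates are rejected, so the learner's output must be a genuine solution. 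No locality, Ehrenfeucht--Fra\"{i}ss\'{e}, or quantifier-rank machinery appears anywhere.

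Your route instead hinges on the claim that a non-solution $h$ ``fails on some structure of every sufficiently large universe size,'' which you correctly flag as the main obstacle but do not prove. This claim is not something Gaifman/Hanf locality or EF-style games about the query can deliver: whether a counter-example of a given size exists depends on the specification $P_\calC$ jointly with the query, not on the query alone, and in general a query can disagree with $\calC$ only on structures of one particular size. So the separation step you defer is both unestablished and, as a statement about the logics, not true in the generality you would need; the paper avoids needing it by fixing a single counter-example per false candidate and choosing $k$ above all their sizes. Similarly, your assertion that treating real placeholders ``parametrically via $\beta(q,e)$'' leaves only finitely many essentially distinct instantiations is stated without argument and is not part of the paper's proof. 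As it stands, the proposal establishes $L((P_\calC,i),Q)\neq\bot$ but not the eventual correctness claim of Theorem~\ref{thm:guidance}.
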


\begin{proof}
By theorems from descriptive complexity cited in Subsection~\ref{subsec:fo-extensions},
if $\calC$ has a solution $f$ in uniform-AC$^0$ (NL, P, P$_\bbR$, NP, NP$_\bbR$),
then there exists a query $q$ in FO (FO(TC), FO(LFP), FFP, existential SO,
existential SO$_\bbR$) that is also a solution for $\calC$. Since $Q$ covers this
logic, we know that there exists a solution $q_i \in Q$. 

Let $S_n = (P_\calC, n)$. Since $q_i$ is a solution for the unrestricted
class $\calC$, it is also a solution for $S_n$. So $L(S_n, Q) \neq \bot$
since it will stop at $q_i$ at the latest. Also, for each $j < i$ such
that $q_j$ is not a solution for $\calC$, there exists a counter-example
for $q_j$ of size $n_j$. Let $k = \max_{j < i} n_j$. By Theorem~\ref{thm:finterm}
we will discard all false $q_j$ in $L(S_l, l)$ for all $l \geq k$, and thus
return a generally correct solution.
\end{proof}



The above theorem provides strong guarantees for our learning method:
if a solution exists, even in a broadly-defined complexity class such
as P or NP, then it will be found. The question remains whether this is
a practical method. In the next section, we examine various learning tasks,
look for reasonable teachers and outline sequences, and show that with
modern SAT, QBF and SMT solvers this method can indeed be practically applied.

\section{Applications}
\label{sec:appl}

As described above, the learning problem in our model consists of
the specification of the teacher and the outline for the learner.
For different learning tasks, it might be advantageous to choose different
outlines. For example, some tasks might only require a very simple Boolean
circuit to solve, while for other we might need the full power of polynomial
time programs with loops and intermediate definitions.
In this section we introduce a few parametrized classes of outlines with
increasing computational power. With each class of outlines, we present
a sample learning task that is well suited for this outline and discuss
how the task is solved in our model.

We start with outlines for very simple quantifier-free first-order formulas.
It turns out that even such basic outlines are useful: they are a good candidate
for finding reductions, as we discuss in the next section. After that, we move
to first-order outlines. These correspond to uniform $\text{AC}^0$ circuits and
we show that they can be used for learning patterns and rules on relational
structures, and even rules for board games. Next, we discuss how threshold
circuits can be encoded in our model. Threshold gates allow to build neural
networks and we show how such networks can be represented in
our model. Finally, we present outlines with fixed-point operators.
Such outlines can encode complicated polynomial-time programs and are hard to learn.
We present a few experiments where simple programs
with loops and definitions are successfully learnt in our model.

We focus here on examining how these sample applications can be achieved
in our model, in order to see that a variety of natural learning tasks
can be modeled.  
Subsections \ref{subsec:reduc} and \ref{subsec:formulas} contain
comparisons to existing, alternative approaches and show that
our methods are competitive.  Subsections
\ref{subsec:realprograms} and \ref{subsec:ptimeprograms} are primarily
intended to show the range of our approach and do not contain
exhaustive experimental results.

\subsection{Learning Quantifier-Free First-Order Formulas}
\label{subsec:reduc}

Complexity-theoretic reductions are an important tool to determine 
the relative hardness of computational problems and other applications
exist.  For example, SAT solvers are now commonly used as general NP
solvers and the necessary transformations are generally reductions.
This naturally leads to the question of (automatically) learning
and verifying reductions.

Learning reductions was first considered by Crouch \textit{et al.}~\cite{CIM10}, 
and we have also~\cite{JK13} implemented, benchmarked and evaluated a number 
of different approaches to the problem.

\paragraph{Problem}
In descriptive complexity, a reduction from the $\tau$-property
defined by $\varphi$ to the $\sigma$-property defined by $\psi$ 
is a ($\tau\rightarrow\sigma$)-query $q$ that satisfies 
\begin{equation}
\label{form:reddef}
A\models\varphi\iff q(A)\models\psi
\end{equation}
for all $\tau$-structures $A$.
Of course, reductions should have less computational power than the
complexity classes they are used in and descriptive complexity usually focuses
on weak classes of reductions, such as first-order reductions (i.e., first-order queries as reductions).
Here, we study quantifier-free first-order reductions, an even weaker class
that still suffices to capture important complexity classes.  While
polynomial time or logspace reductions are most common, such power
is usually not necessary for reductions and only causes additional
difficulties~\cite{Agrawal11,reduc_comp,Veith}.  
Here we introduce learning reductions in the context of our model,
see~\cite{JK13} for more details, other approaches and experimental comparisons.

In order to make finding quantifier-free reductions decidable,
we restrict attention to a fixed size $n$, i.e., we require
Formula~(\ref{form:reddef}) to hold only for structures of size at most $n$.
Assume that we are searching for a dimension-$k$ reduction from
the $\tau$-property defined by $\varphi$ to
the $\sigma$-property defined by $\psi$.

Let $P$ be the set of $\tau$-structures of size at most $n$ that satisfy $\phi$,
$Q$ be the set of $\sigma$-structures of size at most $n^k$ that satisfy $\psi$,
and $\overline{P}$ and $\overline{Q}$ be their complements up to the size bounds.
Our target class is $\calC=(P\times Q)\cup(\overline{P}\times\overline{Q})$ --
we want a query that maps positive instances to
positive instances and negative instances to negative instances.

\paragraph{Outline}
As an outline, we focus on reductions in which
all formulas are in DNF with $c$ conjunctions.  We fix $\varphi_0$ to
be always true and the dimension $k$ (so the new universe is the set
of $k$-tuples of elements of the old universe).  Finally, we have a number
of parameters determining the atomic formulas that may occur --
for example, whether to allow certain numeric predicates such as successor.

\paragraph{Teacher}
When the teacher receives a candidate hypothesis $q$, it checks
Formula~(\ref{form:reddef}), i.e. 
\[A\models\varphi\implies q(A)\models\psi \quad\land\quad A\not\models\varphi \implies q(A)\not\models\psi\]
for all structures $A$ of size at most $n$.
The teacher returns 
$(A,\psi)$ if $A\models\varphi$ and $(A,\lnot\psi)$ otherwise.

\paragraph{Results}

See~\cite{JK13} for an extended comparison of our approach using various SAT,
QBF, ASP and BDD packages, along with the earlier system developed by Crouch
\textit{et al.}~\cite{CIM10}.  Here we present a short summary of the results.

Learning quantifier-free reductions (with the restrictions described above) between
problems in NP\,$\cap$\,coNP is essentially
a $\Sigma_p^2$ problem.  Therefore it can be solved using a reasonable-sized
encoding and single call to either a QBF
solver or ASP solver supporting disjunctive programs.  We therefore compare our approach with modern
QBF and ASP solvers.

We refer to~\cite{JK13} for precise details and an extended experimental comparison
of various approaches to this problem.  In particular, there we present an
open-source implementation (DE\footnote{Available at \url{http://www-alg.ist.hokudai.ac.jp/~skip/de},
configured as \degms using GlueMiniSat 2.2.5 as solver.  Equivalent functionality is available as part of Toss: \url{http://toss.sf.net/}
and a visual interface is also available at \url{http://toss.sf.net/reduct.html}}) of our approach specialized
to reduction-finding.  QBF and ASP instance generation is done using \texttt{ReductionTest.native} (part of Toss,
see~\cite{JK13qbf} for details)\footnote{The input files used in this section are available at
\url{http://toss.sf.net/reductGen.html}}.

Table~\ref{tab:res} considers a set of 48
decision problems in NL (including e.g. directed and undirected reachabililty,
but also several simpler problems) and presents results for all $2304 = 48^2$ reduction-finding
problems constructed between these problems.
These 48 decision problems
are from the ReductionFinder implemented by Crouch \textit{et al.}~\cite{CIM10}
and therefore allow us to compare with ReductionFinder as well.  However, ReductionFinder
considers a slightly different class of reductions, generally resulting in somewhat
simpler instances.  A fair comparison is therefore difficult and ReductionFinder
is included for completeness.  The timeout was set to
120s in Table~\ref{tab:res}.

\begin{table}
\begin{center}
\begin{tabular}{c||r|r|r|r|r|r}
 $(c,n)$ & $(1,3)$ & $(2,3)$ & $(3,3)$
    & $(1,4)$ & $(2,4)$ & $(3,4)$ \\
  \hline
\degms         & 0  & 0   & 10   &
  0   & 5   & 103  \\
\rareqs        & 0   & 0  & 16  &
  19   & 65   & 204 \\
\depqbf        & 0   & 142  & 547  &
  16   & 297  & 711  \\
\gringo        & 40   & 393  & 590  &
  72   & 593  & 836  \\
\lparse        & 51   & 396  & 605  &
  75   & 635  & 850  \\
RedFind        & 1    & 152  & 396  &
  2    & 347  & 547  \\
\end{tabular}
\end{center}
\caption{Number of timeouts for tested
  reduction finding approaches, $k=1$.}
\label{tab:res}
\end{table}

We see that our approach, along with the QBF solver \rareqs, are the best
among these choices.  Interestingly, \rareqs is an expansion-based QBF
solver~\cite{JKSC12} and so in this case it essentially functions like
our approach by refining a series of hypotheses (abstractions) using
counter-examples.

While the parameters used in Table~\ref{tab:res} result in comparatively
easy instances, extended experiments with more difficult parameters give
similar results, cf.~\cite{JK13}.  In addition,
the teacher in Table~\ref{tab:res} considers only
counter-examples of size exactly $n$.  Allowing
counter-examples of size at most $n$ greatly improves
performance of our approach -- very small counter-examples result in easy sub-problems that
tell us a great deal about the space
of possible solutions.

\begin{example} \label{ex-finally-finished-now}
We complete our running Example~\ref{ex-red-teacher} here.
Recall that the task was to find a reduction between two
NL-complete problems: directed $s,t$-reachability
(given a directed graph with labeled vertices $s$ and $t$, determine
whether $t$ is reachable from $s$) and all-pairs reachability (determine if 
a directed graph is strongly connected). The problems were defined
by these 2 formulas.
\[\text{Reach}:=TC[x,y.E(x,y)](s,t) \quad \text{AllReach}:=
\forall x_1,x_2\ (TC[y,z.E(y,z)](x_1,x_2))\,.\]
We were searching through the space of outlines as described above,
and structures of increasing sizes.
Our system finds the following correct reduction for outline
$q_1$ as in Example~\ref{ex-triv-outline} and sizes $n\ge 3$:
\[\left(k:=1,\ \phi_0:=true,\ \phi_1:= x_1=s\lor x_2=t\lor E(x_2,x_1)\right)\,.\]

This reverses all edges in the original graph, adds
directed edges from $s$ to all vertices and also adds
directed edges to $t$ from all vertices.
A similar reduction exists without reversing the edges -- however
the above is our actual output.
\end{example}

\subsection{Learning First-Order Formulas}
\label{subsec:formulas}

Recently, a system was implemented \cite{K12} that represents board games
as relational structures and learns their rules from observing example play
videos. Fundamentally, the system works by computing minimal distinguishing
formulas for sets of structures, e.g., formulas satisfied by structures
representing winning positions and by none of the losing ones. We implement
the computation of distinguishing formulas in our framework and compare
the performance.

\paragraph{Problem}
Let $\calP$ and $\calN$ be finite sets of $\tau$-structures.
We want to learn a formula $\phi$ without free
variables such that $A \models \phi$ for all $A \in \calP$ and for
no $A \in \calN$. Unlike previous tasks, we want a minimal such formula,
not just an arbitrary one.



\paragraph{Outline and Teacher} 
The outlines in this case are not quantifier-free any more,
but they are built by adding quantifier prefixes to quantifer-free
outlines similar to the ones used above. In this case, we start
with a CNF formula with $c$ clauses and $k$ additional variables.
Then, we quantify the $k$ additional variables existentially.
The final outline is then a disjunction of $l$ such quantified CNF formulas,
each with $k$ added variables and at most $c$ clauses.
Moreover, to find minimal formulas we iterate through $k$, for each
$k$ we range $l$ from $1$ to $k+1$, and $c$ as well. The teacher is simple:
given a formula $\phi$ it checks if $A \models \phi$ for all $A \in \calP$,
and if not, it returns $(A,\text{true})$ for some $A \not\models \phi$.
Then, it checks if $A \not\models \phi$ for all $A \in \calN$ and
returns $(A,\text{false})$\footnote{Here, ``true'' and ``false'' are satisfied
by encoded Boolean structures.} if this is not the case for some
$A \models \phi$.

\paragraph{Results}
We substituted our SAT-based learner for the procedure for computing
distinguishing formulas used in \cite{K12}. To replicate the experiments,
we used the most recent revision of Toss\footnote{Revision 1935 on Sourceforge,
compiled with OCaml 4.02.1.} and ran each experiment 3 times on a 4Ghz Intel i7-4790K
processor. Since the variance in time was negligible, we only report the mean
running time.\footnote{To replicate, after getting Toss, do
\texttt{make Learn/LearnGameTest.native} and then \texttt{LearnGameTest.native -dir Learn/examples/ -f Breakthrough001}, or \texttt{Connect4001}, etc. for the original results,
and with \texttt{-s} to use our SAT solver technique.} Comparing the results,
the SAT-based approach appears to offer significantly better performance, even though
it is more general and competes against a system hand-crafted specifically for this problem.
\begin{center}
\begin{tabular}{c||c|c|c|c}
   & Breakthrough & Connect4 & Gomoku & Pawn-Whopping \\
\hline
Original system  & 39s & 14s & 4s & 473s \\
SAT-based system & 2s  & 5s  & 2s & 130s \\
\end{tabular}
\end{center}

We use the same example plays for both systems -- these examples were chosen
by hand for the original system \cite{K12}. But our SAT-based approach
searches (faster) for formulas in more expressive logics, beyond reach for
the original system.  For this reason, the resulting formulas are not
always correct -- they are for Breakthrough and Gomoku, but not for
Connect4 and Pawn-Whopping. It would be easy to overcome this
by adding examples or changing the outline to match the more restrictive
logics used by the original system.

\subsection{Learning Threshold Circuits (Formulas with Reals)}
\label{subsec:realprograms}

Let us now show how neural networks can be represented in our model.
We will focus on deep \emph{convolutional networks}, but the ideas
generalize to other models easily.

Convolutional neural networks \emph{share weights}  using
\emph{sliding windows} over the input vector and often alternate
such shared-weights-layers with max-pooling layers which just compute
maximum over a window to reduce the number of neurons and, more
importantly, to capture different scales, e.g. in image recognition.
In the convolutional layer, each neuron is a so called \emph{rectified
linear unit}. This means that each neuron multiplies its inputs by
the respective weights, adds the result, subtracts another weight,
and sends to its output the maximum of this result and 0.

\begin{center}
\begin{tikzpicture}[scale=0.5] \small

\draw[thick] (-9, -2) -- (-8, -2) ;
\draw[thick] (-7, -2) -- (-6, -2) ;
\draw[thick] (-5, -2) -- (-4, -2) ;
\draw[thick] (-3, -2) -- (-2, -2) ;
\draw[thick] (3, -2) -- (2, -2) ;
\draw[thick] (5, -2) -- (4, -2) ;
\draw[thick] (7, -2) -- (6, -2) ;
\draw[thick] (9, -2) -- (8, -2) ;
\node at (0, -2) {\dots} ;

\draw[thick, dashed, color=gray] (-9, -2.5) -- (-2, -2.5) ;
\draw[thick, dashed, color=gray] (-9, -2.5) -- (-9, -2.1) ;
\draw[thick, dashed, color=gray] (-2, -2.5) -- (-2, -2.1) ;
\node[color=gray] at (-5.5, -3) {window of $4$} ;

\node[draw,thick] (a) at (-7.5, 0) {$w$} ;
\node[draw,thick] (b) at (-5.5, 0) {$w$} ;
\node[draw,thick] (c) at (5.5, 0) {$w$} ;
\node[draw,thick] (d) at (7.5, 0) {$w$} ;
\node at (0, 0.1) {\dots} ;

\draw[thick,->] (-8.5, -1.9) -- (a) ;
\draw[thick,->] (-6.5, -1.9) -- (a) ;
\draw[thick,->] (-4.5, -1.9) -- (a) ;
\draw[thick,->] (-2.5, -1.9) -- (a) ;

\draw[thick,->] (-6.5, -1.9) -- (b) ;
\draw[thick,->] (-4.5, -1.9) -- (b) ;
\draw[thick,->] (-2.5, -1.9) -- (b) ;
\draw[thick,->] (-0.5, -1.9) -- (b) ;

\draw[thick,->] (6.5, -1.9) -- (c) ;
\draw[thick,->] (4.5, -1.9) -- (c) ;
\draw[thick,->] (2.5, -1.9) -- (c) ;
\draw[thick,->] (0.5, -1.9) -- (c) ;

\draw[thick,->] (8.5, -1.9) -- (d) ;
\draw[thick,->] (6.5, -1.9) -- (d) ;
\draw[thick,->] (4.5, -1.9) -- (d) ;
\draw[thick,->] (2.5, -1.9) -- (d) ;

\node[draw,thick] (m1) at (-6.5, 2) {$\max$} ;
\node[draw,thick] (m2) at (6.5, 2) {$\max$} ;
\node at (0, 2.1) {\dots} ;

\draw[thick,->] (a) -- (m1) ;
\draw[thick,->] (b) -- (m1) ;
\draw[thick,->] (c) -- (m2) ;
\draw[thick,->] (d) -- (m2) ;
\end{tikzpicture}
\end{center}

The figure above sketches a convolutional network. We wrote $w$ in each neuron
to emphasise that \emph{the same weight vector} is used in all neurons
in this layer. For vision applications, the sliding window and max-pooling
window can be 2D, e.g. $3\times 3$. For classification, there is often a
fully-connected layer before the output layer. All these can be encoded in
our formalism in an analogous way, so we focus on 1D convolutional networks.

The input to a 1D network is a $n$-bit vector of real numbers. We represent
it as a relational structure over $U = \{0,\dots,n-1\}$ with a unary
function $f_0$, such that $f_0(i)$ is the $i$th component of the input vector.

To keep our reduction simple, let us assume our network has two convolutional
layers with a window of size $3$ and two maps each, with one
max-pooling layer in between. The outline we provide for this network generalizes in
an easy way to other network architectures.

By definition, the output of the $i$th neuron of the first map of
the first convolutional layer, which we denote $f^1_1(i)$, is given by
\[ f^1_1(i) = \max(0, w^1_1 \cdot f_0(i) + w^1_2 \cdot f_0(i+1) + w^1_3 \cdot f_0(i+2) - w^1_0). \]
In our formalism, we can write $\max(0, t)$ as $t \cdot \chi(t > 0)$,
so let us use $\max$ as a more readable shorthand. We can formalize
the above as follows:
\[ f^1_1(x) = \max\left(0, \sum_{y, z : \suc(x, y) \wedge \suc(y, z)} \!\!\!\!
     w^1_1 \cdot f_0(x) + w^1_2 \cdot f_0(y) + w^1_3 \cdot f_0(z) - w^1_0\right). \]
Note that $w^1_0$, $w^1_1$, $w^1_2$, and $w^1_3$ are now the real placeholders of our outline.
We define $f^2_1$ analogously with $w^2_i$ (i = 0,\dots,3) and get the 2 maps of
the first convolutional layer complete in this way.

However, we must still construct the max-pooling and following layer, where there are
fewer neurons due to the scaling effect of max-pooling.  This is done
by skipping every other element. Let us first define the max-pooling outputs:
\[ f^i_{\text{m}}(x) = \max\left(f^i_1(x), \sum_{y:\suc(x,y)} f^i_1(y)\right) \]
for $i=1,2$. Let $\nsuc(x,y) := \exists z (\suc(x,z) \wedge \suc(z,y))$.
We create the second convolutional layer by skipping over every other element.
\begin{align*}
\hspace{-0.35in}f^i_2(x) = \max\Bigg(0, \sum_{y, z : \nsuc(x, y) \wedge \nsuc(y, z)} \!\!\!\! &
  w^2_{1} \cdot f^1_{\text{m}}(x) + w^2_{2} \cdot f^1_{\text{m}}(y) + w^2_{3} \cdot f^1_{\text{m}}(z) + \\
& w^2_{4} \cdot f^2_{\text{m}}(x) + w^2_{5} \cdot f^2_{\text{m}}(y) + w^2_{6} \cdot f^2_{\text{m}}(z) - w_0\Bigg).
\end{align*}
Now, we can trivially make a query that selects even elements from
our input structure and uses $f^1_2$ and $f^2_2$ as functions.
It represents exactly the output of the second convolutional layer
in a network.

As we have seen, neural networks fit nicely into our model, in fact
they correspond exactly to a specific syntactic class of formulas. But we
observed a problem when experimenting with such encodings using the method
presented in the proofs above: SMT solvers are generally not efficient
when dealing with neural network problems of this kind. Similar
issues have been
previously reported \cite{PT12} and we are not aware of a fully satisfying
solution at this time. Still, the approach we present can be adapted
to make use of partial procedures, such as stochastic gradient descent,
as part of the learner process.

\subsection{Learning Polynomial-Time Programs}
\label{subsec:ptimeprograms}

In this section, we consider synthesizing programs for a given logical
specification. Itzhaky \textit{et al.}~\cite{IGIS10} considered a similar problem, however they
focused on synthesizing formulas in more specialized logics.

\paragraph{Problem}
In program synthesis, we are given a specification and hope to find
an efficient program satisfying it.  For us, a specification is a way
to verify whether the output $q(A)$ is accepted for $A$.  There are
two major variations -- either the output for each structure is unique
(as in our example here), or there is a set of acceptable outputs (e.g.,
when finding some satisfying solution for SAT instances).

In our example here, we have a query $s$ in an expressive logic ($\SO$)
and wish to find an equivalent query in a less-expressive logic.  In
particular, we consider the problem of identifying winning regions in
finite games -- i.e., directed graphs with a predicate $V_0(x)$ meaning that
vertex $x$ belongs to Player~0.  Decidability requires restricting the size of
the games to $n$, and we set $\calC$ to be the set of pairs $(A,B)$ such that
$A$ is a finite game of size at most $n$ and $B$ is the extension of $A$ with
the winning region identified in a new monadic predicate $W$.

\paragraph{Outline}
Here, we re-use the outlines introduced for learning games, but
with an added extension to least fixed-point formulas for added
expressive power.  We focus on least fixed-point formulas, with a single
LFP operator that is outermost\footnote{This
is a normal-form for fixed-point logics, although the arity of
the fixed point may increase when we convert to it (see Corollary 4.11 in~\cite{I99}).}.
We fix the arity~$a$ of the fixed-point predicate, and assume, as before, that
the inner formula is a disjunction of $l$ quantified CNF formulas
with $k$ variables and $c$ clauses each.  We use such an outline for exactly one
selected relation $W$ in the query, all others are set to identity.

\paragraph{Teacher}
Assume that we have a $\SO$ query $s$ that produces the desired extension with
the winning region identified.
Given a hypothesis $q$, the teacher can guess a game $A$
of size at most $n$ such that in $q(A)$ the new relation $W(x)$ is not
equivalent to the region in $s(A)$. Let $a_1, \ldots, a_k$ be the winning
positions in $s(A)$.  The teacher then returns the pair
$\big(A, \forall x\ (W(x) \liff \biglor_i (x = a_i))\big)$.

\paragraph{Example}
Consider the case of identifying winning regions in finite reachability games.
When the current vertex belongs to a player, that player chooses
an outgoing edge and moves to a connected vertex.  Player~1 loses
if the play reaches a vertex that belongs to her and has no outgoing edge.
Similarly, Player~0 loses if the play reaches a position where he must but
cannot move, but also if the play becomes a cycle and goes on forever. 
The goal is to identify the vertices from which Player~0 has a winning strategy.
That is, we want a formula $\phi(x)$ which holds exactly on
the vertices for which Player~0 has a winning strategy.

Reachability games are \emph{positional}, i.e., it suffices to
consider strategies that depend only on the current position and
not on the history of the game. Therefore, a strategy of Player~1 can be
defined as a binary relation $S_1$ that is a subset of the edges and that,
for vertices of Player~0, contains all successors of the vertex as well.
Then, $x$ is winning for Player~1 (by $S_1$) if all vertices reachable from $x$ 
by $S_1$ either belong to Player~0 or have an $S_1$-successor. This is easily
expressible using the TC operator and guessing the strategy leads to a 
second-order formula $\phi_0(x)$ which holds exactly if $x$ is winning for
Player~0.

In reachability games, the following LFP formula defines the winning region
for Player~0:
\begin{align*}
\LFP\Big[W(x)= \Big\{&\left(\lnot \exists x_1 : (\lnot W(x_1)\land E(x,x_1))\land \lnot V_0(x)\right)\quad \lor \\
 &\,\hphantom{(\lnot}\exists x_1 : (W(x_1)\land E(x,x_1)\land V_0(x))\Big\}\Big](x)\,.
\end{align*}

The LFP operator recursively defines $W(x)$, starting with the empty set
and adding tuples that satisfy the formula on the right until a fixed-point is
reached. 
 Therefore, this formula says that a position $x$ is winning for Player~0 if\\
(a) it is the opponent's move and all outgoing edges go to positions we win;
or (b) it is Player~0's move and there is an edge to a winning position.

Recall that the fixed-point predicate is initially empty.  Therefore, the
winning positions after one iteration are the positions belonging to the
opponent with no outgoing edges.  Then, the winning region grows gradually
until it is the \emph{attractor} of those positions -- which is correct.  
An equivalent, slightly longer formula is found by our program in less than 
a minute for $n\ge3$.\footnote{To replicate, after getting Toss, do
\texttt{make Learn/LfpTest.native} and then run it.}
\paragraph{Further work}
The LFP formula for reachability games can be written by hand, but our
motivation for presenting this example is the hope to compute polynomial-time
solvers for other games. In particular, weak parity games and parity games are
also positional, so it is trivial to write a $\SO$ formula defining the
winning region (as we did for reachability games). But the polynomial-time 
program
for solving weak parity games is complicated, and the existence
of a polynomial-time solver for full parity games (which is equivalent to
the existence of an LFP formula) is a long-standing open problem.

Our implementation can also search for other programs, e.g.,
for graph isomorphism, graph coloring or SAT.
There are classes where these problems are in polynomial-time\footnote{E.g.,
isomorphism of planar graphs, bounded-degree graphs, and graphs excluding
a minor; $k$-SAT and $k$-coloring are NL-complete
for $k=2$ and NP-complete for $k\ge 3$.}.

\section{Conclusions and Future Work}
\label{sec:concl}

Above, we introduced our machine learning approach and its implementation
using modern SMT solvers. We prove that our approach comes with strong
theoretical guarantees: as long as a model exists in a given complexity
class (e.g., NL, P, NP), it will be found.
Thanks to the efficiency of modern SAT and QBF solvers, our
general procedure outperforms specialized approaches both in learning
reductions~\cite{CIM10} and in learning from examples~\cite{K12}.
We consider these early results promising and encourage further
experimentation with our freely available implementation\footnote{Available from
\url{http://toss.sf.net}, see~\cite{JK13} for learning instructions.}.

There are many questions we leave unanswered.
For example, there is a large variance in runtime depending on
the precise series of counter-examples given by the teacher. We would
like to know how to choose ``good'' counter-examples and whether
randomness~\cite{Z06} can help. We ask what outlines are ``good'',
how to choose them to find the desired programs quickly and to make them 
readable.
We are interested in re-using sub-formulas found for one problem to speed up
learning in another one, a form of knowledge transfer.

Finally, efficiency of our approach in quantitative settings has been disappointing. 
It seems that at present SMT solvers are not suited for tasks such
as verification or synthesis of neural networks. But SMT
solvers are evolving and hopefully will improve in this regard.
One could also use stochastic gradient descent directly in our learner.
How to utilize such incomplete procedures efficiently remains an open question,
but we hope that this work will motivate further studies on the boundary
between SMT solvers and machine learning.

\paragraph{Acknowledgements}
We appreciate the encouragement and support of
Neil Immerman and Thomas Zeugmann, without
whom this paper would not exist.

\bibliography{synthesis}

\begin{thebibliography}{}

\bibitem[\protect\BCAY{Agrawal}{Agrawal}{2011}]{Agrawal11}
Agrawal, M. \BBOP2011\BBCP.
\newblock \BBOQ The isomorphism conjecture for constant depth reductions\BBCQ\
\newblock {\Bem Journal of Computer and System Sciences}, {\Bem 77\/}(1),
  3--13.

\bibitem[\protect\BCAY{Agrawal, Allender, Impagliazzo, Pitassi,\ \BBA\
  Rudich}{Agrawal et~al.}{2001}]{reduc_comp}
Agrawal, M., Allender, E., Impagliazzo, R., Pitassi, T., \BBA\ Rudich, S.
  \BBOP2001\BBCP.
\newblock \BBOQ Reducing the complexity of reductions\BBCQ\
\newblock {\Bem Computational Complexity}, {\Bem 10\/}(2), 117--138.

\bibitem[\protect\BCAY{Blum, Shub,\ \BBA\ Smale}{Blum et~al.}{1989}]{BSS}
Blum, L., Shub, M., \BBA\ Smale, S. \BBOP1989\BBCP.
\newblock \BBOQ On a theory of computation and complexity over the real
  numbers: {$NP$}-completeness, recursive functions and universal
  machines\BBCQ\
\newblock {\Bem Bull. Amer. Math. Soc. (N.S.)}, {\Bem 21\/}(1), 1--46.

\bibitem[\protect\BCAY{Crouch, Immerman,\ \BBA\ Moss}{Crouch
  et~al.}{2010}]{CIM10}
Crouch, M., Immerman, N., \BBA\ Moss, J. E.~B. \BBOP2010\BBCP.
\newblock \BBOQ Finding reductions automatically\BBCQ\
\newblock In {\Bem Fields of Logic and Computation -- Essays Dedicated to Yuri
  Gurevich on the Occasion of His 70th Birthday}, \lowercase{\BVOL}\ 6300 of
  {\Bem Lecture Notes in Computer Science}, \BPGS\ 181--200.

\bibitem[\protect\BCAY{Fagin}{Fagin}{1974}]{Fagin74}
Fagin, R. \BBOP1974\BBCP.
\newblock \BBOQ Generalized first-order spectra and polynomial-time
  recognizable sets\BBCQ\
\newblock In {\Bem Complexity of Computation, SIAM-AMS Proceedings},
  \lowercase{\BVOL}~7, \BPGS\ 43--73.

\bibitem[\protect\BCAY{Gr\"{a}del\ \BBA\ Gurevich}{Gr\"{a}del\ \BBA\
  Gurevich}{1998}]{GGmf}
Gr\"{a}del, E.\BBACOMMA\  \BBA\ Gurevich, Y. \BBOP1998\BBCP.
\newblock \BBOQ Metafinite model theory\BBCQ\
\newblock {\Bem Information and Computation}, {\Bem 140\/}(1), 26--81.

\bibitem[\protect\BCAY{Gr{\"a}del, Kolaitis, Libkin, Marx, Spencer, Vardi,
  Venema,\ \BBA\ Weinstein}{Gr{\"a}del et~al.}{2007}]{FMT07}
Gr{\"a}del, E., Kolaitis, P.~G., Libkin, L., Marx, M., Spencer, J., Vardi,
  M.~Y., Venema, Y., \BBA\ Weinstein, S. \BBOP2007\BBCP.
\newblock {\Bem Finite Model Theory and Its Applications}.
\newblock Springer.

\bibitem[\protect\BCAY{Gr{\"a}del\ \BBA\ Meer}{Gr{\"a}del\ \BBA\
  Meer}{1996}]{GM96}
Gr{\"a}del, E.\BBACOMMA\  \BBA\ Meer, K. \BBOP1996\BBCP.
\newblock \BBOQ {Descriptive Complexity Theory over the Real Numbers}\BBCQ\
\newblock {\Bem Mathematics of Numerical Analysis: Real Number Algorithms},
  {\Bem 32}, 381--403.

\bibitem[\protect\BCAY{Grohe}{Grohe}{2008}]{Grohe08}
Grohe, M. \BBOP2008\BBCP.
\newblock \BBOQ The quest for a logic capturing {PTIME}\BBCQ\
\newblock In {\Bem Proceedings, Twenty-Third Annual {IEEE} Symposium on Logic
  in Computer Science, {LICS} 2008}, \BPGS\ 267--271.

\bibitem[\protect\BCAY{Gulwani}{Gulwani}{2010}]{G10surv}
Gulwani, S. \BBOP2010\BBCP.
\newblock \BBOQ Dimensions in program synthesis\BBCQ\
\newblock In {\Bem Proceedings of the 12th International {ACM} {SIGPLAN}
  Symposium on Principles and Practice of Declarative Programming}, PPDP '10,
  \BPGS\ 13--24.

\bibitem[\protect\BCAY{Heule, Kullmann,\ \BBA\ Marek}{Heule
  et~al.}{2016}]{HKM16}
Heule, M. J.~H., Kullmann, O., \BBA\ Marek, V.~W. \BBOP2016\BBCP.
\newblock \BBOQ Solving and verifying the boolean {P}ythagorean triples problem
  via cube-and-conquer\BBCQ\
\newblock In {\Bem Theory and Applications of Satisfiability Testing, 19th
  International Conference, SAT 2016, Bordeaux, France, July 2016,
  Proceedings}, \lowercase{\BVOL}\ 9710 of {\Bem Lecture Notes in Computer
  Science}, \BPGS\ 228--245.

\bibitem[\protect\BCAY{Immerman}{Immerman}{1986}]{I86}
Immerman, N. \BBOP1986\BBCP.
\newblock \BBOQ Relational queries computable in polynomial time\BBCQ\
\newblock {\Bem Inform. Control}, {\Bem 68}, 86--104.

\bibitem[\protect\BCAY{Immerman}{Immerman}{1987}]{I87}
Immerman, N. \BBOP1987\BBCP.
\newblock \BBOQ Languages that capture complexity classes\BBCQ\
\newblock {\Bem SIAM J. Comput.}, {\Bem 16\/}(4), 760--778.

\bibitem[\protect\BCAY{Immerman}{Immerman}{1999}]{I99}
Immerman, N. \BBOP1999\BBCP.
\newblock {\Bem Descriptive Complexity}.
\newblock Springer-Verlag.

\bibitem[\protect\BCAY{Itzhaky, Gulwani, Immerman,\ \BBA\ Sagiv}{Itzhaky
  et~al.}{2010}]{IGIS10}
Itzhaky, S., Gulwani, S., Immerman, N., \BBA\ Sagiv, M. \BBOP2010\BBCP.
\newblock \BBOQ A simple inductive synthesis methodology and its
  applications\BBCQ\
\newblock In {\Bem Proceedings of the 25th Annual ACM SIGPLAN Conference on
  Object-Oriented Programming, Systems, Languages, and Applications, OOPSLA
  2010, October 17-21, 2010, Reno/Tahoe, Nevada, USA}, \BPGS\ 36--46.

\bibitem[\protect\BCAY{Janota, Jordan, Klieber, Lonsing, Seidl,\ \BBA\
  Gelder}{Janota et~al.}{2016}]{qbf14}
Janota, M., Jordan, C., Klieber, W., Lonsing, F., Seidl, M., \BBA\ Gelder,
  A.~V. \BBOP2016\BBCP.
\newblock \BBOQ {QBF Gallery} 2014: The {QBF} competition at the {FLoC 2014
  Olympic Games}\BBCQ\
\newblock {\Bem Journal on Satisfiability, Boolean Modeling and Computation},
  {\Bem 9}, 187--206.

\bibitem[\protect\BCAY{Janota, Klieber, Marques-Silva,\ \BBA\ Clarke}{Janota
  et~al.}{2012}]{JKSC12}
Janota, M., Klieber, W., Marques-Silva, J., \BBA\ Clarke, E. \BBOP2012\BBCP.
\newblock \BBOQ Solving {QBF} with counterexample guided refinement\BBCQ\
\newblock In {\Bem Theory and Applications of Satisfiability Testing, SAT 2012,
  15th International Conference, Trento, Italy, June 2012, Proceedings},
  \lowercase{\BVOL}\ 7317 of {\Bem Lecture Notes in Computer Science}, \BPGS\
  114--128.

\bibitem[\protect\BCAY{Jordan\ \BBA\ Kaiser}{Jordan\ \BBA\
  Kaiser}{2013a}]{JK13qbf}
Jordan, C.\BBACOMMA\  \BBA\ Kaiser, {\L}. \BBOP2013a\BBCP.
\newblock \BBOQ Benchmarks from reduction finding\BBCQ\
\newblock In Lonsing, F.\BBACOMMA\  \BBA\ Seidl, M.\BEDS, {\Bem International
  Workshop on Quantified Boolean Formulas 2013, Informal Workshop Report},
  \BPGS\ 40--43.

\bibitem[\protect\BCAY{Jordan\ \BBA\ Kaiser}{Jordan\ \BBA\
  Kaiser}{2013b}]{JK13}
Jordan, C.\BBACOMMA\  \BBA\ Kaiser, {\L}. \BBOP2013b\BBCP.
\newblock \BBOQ Experiments with reduction finding\BBCQ\
\newblock In {\Bem Theory and Applications of Satisfiability Testing, 16th
  International Conference, SAT 2013, Helsinki, Finland, July 2013,
  Proceedings}, \lowercase{\BVOL}\ 7962 of {\Bem Lecture Notes in Computer
  Science}, \BPGS\ 192--207.

\bibitem[\protect\BCAY{Kaiser}{Kaiser}{2012}]{K12}
Kaiser, {\L}. \BBOP2012\BBCP.
\newblock \BBOQ Learning games from videos guided by descriptive
  complexity\BBCQ\
\newblock In {\Bem Proceedings of the 26th Conference on Artificial
  Intelligence, AAAI-12}, \BPGS\ 963--970.

\bibitem[\protect\BCAY{Kaiser, Lang, Le\ss{}enich,\ \BBA\ L{\"o}ding}{Kaiser
  et~al.}{2015}]{KLLL15}
Kaiser, {\L}., Lang, M., Le\ss{}enich, S., \BBA\ L{\"o}ding, C. \BBOP2015\BBCP.
\newblock \BBOQ A unified approach to boundedness properties in {MSO}\BBCQ\
\newblock In {\Bem 24th EACSL Annual Conference on Computer Science Logic (CSL
  2015)}, \lowercase{\BVOL}~41 of {\Bem LIPIcs}, \BPGS\ 441--456.

\bibitem[\protect\BCAY{Kitzelmann}{Kitzelmann}{2010}]{K10surv}
Kitzelmann, E. \BBOP2010\BBCP.
\newblock \BBOQ Inductive programming: A survey of program synthesis
  techniques\BBCQ\
\newblock In {\Bem Approaches and Applications of Inductive Programming, Third
  International Workshop, {AAIP} 2009, Edinburgh, UK, September 2009, Revised
  Papers}, \lowercase{\BVOL}\ 5812 of {\Bem Lecture Notes in Computer Science},
  \BPGS\ 50--73.

\bibitem[\protect\BCAY{Konev\ \BBA\ Lisitsa}{Konev\ \BBA\ Lisitsa}{2015}]{KL15}
Konev, B.\BBACOMMA\  \BBA\ Lisitsa, A. \BBOP2015\BBCP.
\newblock \BBOQ Computer-aided proof of {Erd\H{o}s} discrepancy
  properties\BBCQ\
\newblock {\Bem Artificial Intelligence}, {\Bem 224}, 103--118.

\bibitem[\protect\BCAY{Lonsing, Seidl,\ \BBA\ {Van Gelder}}{Lonsing
  et~al.}{2016}]{qbf13}
Lonsing, F., Seidl, M., \BBA\ {Van Gelder}, A. \BBOP2016\BBCP.
\newblock \BBOQ The {QBF Gallery}: Behind the scenes\BBCQ\
\newblock {\Bem Artificial Intelligence}, {\Bem 237}, 92--114.

\bibitem[\protect\BCAY{Pulina\ \BBA\ Tacchella}{Pulina\ \BBA\
  Tacchella}{2012}]{PT12}
Pulina, L.\BBACOMMA\  \BBA\ Tacchella, A. \BBOP2012\BBCP.
\newblock \BBOQ Challenging {SMT} solvers to verify neural networks\BBCQ\
\newblock {\Bem {AI} Commun.}, {\Bem 25\/}(2), 117--135.

\bibitem[\protect\BCAY{Renegar}{Renegar}{1998}]{renegar98}
Renegar, J. \BBOP1998\BBCP.
\newblock \BBOQ Recent progress on the complexity of the decision problem for
  the reals\BBCQ\
\newblock In {\Bem Quantifier Elimination and Cylindrical Algebraic
  Decomposition}, Texts and Monographs in Symbolic Computation, \BPGS\
  220--241. Springer Vienna.

\bibitem[\protect\BCAY{Solar-Lezama, Tancau, Bodik, Seshia,\ \BBA\
  Saraswat}{Solar-Lezama et~al.}{2006}]{sketching-conference}
Solar-Lezama, A., Tancau, L., Bodik, R., Seshia, S., \BBA\ Saraswat, V.
  \BBOP2006\BBCP.
\newblock \BBOQ Combinatorial sketching for finite programs\BBCQ\
\newblock In {\Bem Proceedings of the 12th International Conference on
  Architectural Support for Programming Languages and Operating Systems},
  ASPLOS XII, \BPGS\ 404--415.

\bibitem[\protect\BCAY{Tarski}{Tarski}{1951}]{Tarski51}
Tarski, A. \BBOP1951\BBCP.
\newblock {\Bem A Decision Method for Elementary Algebra and Geometry}.
\newblock University of California Press.
\newblock Originally published as {RAND} Report {R}-109, 1948. Also available
  in \textit{Quantifier Elimination and Cylindrical Algebraic Decomposition},
  pp. 24--84, Springer 1998.

\bibitem[\protect\BCAY{Vardi}{Vardi}{1982}]{V82}
Vardi, M.~Y. \BBOP1982\BBCP.
\newblock \BBOQ The complexity of relational query languages\BBCQ\
\newblock In {\Bem Proceedings of the 14th Annual ACM Symposium on Theory of
  Computing, May 1982, San Francisco, California, USA}, \BPGS\ 137--146.

\bibitem[\protect\BCAY{Veith}{Veith}{1998}]{Veith}
Veith, H. \BBOP1998\BBCP.
\newblock \BBOQ Succinct representation, leaf languages, and projection
  reductions\BBCQ\
\newblock {\Bem Information and Computation}, {\Bem 142\/}(2), 207--236.

\bibitem[\protect\BCAY{Zeugmann}{Zeugmann}{2006}]{Z06}
Zeugmann, T. \BBOP2006\BBCP.
\newblock \BBOQ From learning in the limit to stochastic finite learning\BBCQ\
\newblock {\Bem Theoret. Comput. Sci.}, {\Bem 364\/}(1), 77--97.

\end{thebibliography}
\bibliographystyle{theapa}



\end{document}